%% file: main.tex
\documentclass{article}

\PassOptionsToPackage{numbers}{natbib}

\usepackage[preprint,nonatbib]{neurips_2026}
\usepackage[numbers]{natbib}
\usepackage[utf8]{inputenc}
\usepackage[T1]{fontenc}
\usepackage{hyperref}
\usepackage{url}
\usepackage{booktabs}
\usepackage{amsfonts}
\usepackage{nicefrac}
\usepackage{microtype}
\usepackage{amsmath}
\usepackage{amsthm}
\usepackage{mathtools}
\usepackage{algorithm}
\usepackage{algpseudocode}
\usepackage{amssymb}
\usepackage{xcolor}
\usepackage{tikz}
\usetikzlibrary{arrows.meta, positioning}
\newtheorem{theorem}{Theorem}
\newtheorem*{claim*}{Claim}

\title{Probability-Flow Distillation: Exact Wasserstein Gradient Flow for High-Fidelity 3D Generation}

\author{
  Rohith Ramanan \\
  Department of Physics \\
  Indian Institute of Technology Madras \\
  \texttt{rrohith2633@gmail.com}
  \And
  A.\ N.\ Rajagopalan \\
  Department of Electrical Engineering \\
  Indian Institute of Technology Madras \\
  \texttt{raju@ee.iitm.ac.in}
}

\begin{document}
\maketitle

\input{sections/0_abstract}

\input{sections/1_introduction}

\input{sections/2_prelim}

\input{sections/3_text-to-3d}

\input{sections/4_grad}

\input{sections/5_pfd}

\input{sections/6_experiments}

\input{sections/7_conclusions}

\bibliographystyle{unsrtnat}
\bibliography{references}

\appendix

\input{sections/A_proof}

\input{sections/B_algorithm}

\input{sections/C_ddimode}

\input{sections/D_details}

\input{sections/E_ablation}

\input{sections/F_janus}

\input{sections/G_additional}

\end{document}

%% file: sections/0_abstract.tex
\begin{abstract}

Score Distillation Sampling (SDS) and its variants have been widely used for text-to-3D generation by distilling 2D image diffusion priors. However, the standard SDS objective is prone to severe mode collapse, frequently yielding over-smoothed and over-saturated results. Although recent advancements, such as Score Distillation via Inversion (SDI), mitigate these artifacts and produce visually sharper models, they ultimately fail to faithfully capture the full target distribution. In this work, we show that the bottleneck limiting the sampling capacity of SDI stems from its reliance on the posterior mean estimator, which is mathematically equivalent to a single-step Euler approximation of the deterministic reverse DDIM trajectory. To address this, we propose a naturally motivated extension termed Probability-Flow Distillation (PFD). We establish that PFD corresponds exactly to a Wasserstein gradient flow, thereby inducing principled distribution-matching dynamics. Finally, we show that PFD can synthesize 3D assets with fine-grained, high-fidelity details and achieve improved quality compared to existing methods.

\end{abstract}

%% file: sections/1_introduction.tex
\section{Introduction}
\label{sec:intro}

Diffusion-based and related generative models~\cite{ddpm, ldm, imagen, karras, flow} have established state-of-the-art performance in realistic image synthesis. However, extension of these capabilities to direct text-to-3D generation remains severely bottlenecked by the lack of large-scale, high-quality 3D training data. To mitigate this data scarcity, the pioneering work DreamFusion~\cite{sds} introduces Score Distillation Sampling (SDS), which leverages pre-trained 2D diffusion models as implicit priors to optimize parameterized 3D representations~\cite{nerf, instantngp, dmtet, 3dgs}. Despite its effectiveness, SDS often produces over-smoothed geometry and textures with unnatural color saturation. Subsequent work has attempted to alleviate these issues through heuristic gradient modifications~\cite{dds, nfsd} and multi-stage optimization pipelines~\cite{magic3d, fantasia3d, gaussiandreamer}. In particular, Classifier Score Distillation (CSD)~\cite{csd} suggests that the guidance term alone, namely the difference between the text-conditioned and unconditional prediction, is sufficient for effective text-to-3D generation. Despite these refinements, such approaches remain constrained by the fundamental limitations of the SDS gradient.

SDS and its variants recast sampling over the parameters of the underlying representation as an optimization problem. From this perspective, the SDS gradient behaves as a mode-seeking force rather than a true generative sampling mechanism, precipitating the observed artifacts. ProlificDreamer~\cite{vsd} addresses this limitation by framing the problem through the lens of particle variational inference via Variational Score Distillation (VSD), which induces a Wasserstein gradient flow in the space of probability measures, thus promoting distribution matching. However, this formulation requires LoRA-based~\cite{lora} fine-tuning of the diffusion model, incurring additional memory overhead.

\begin{figure}[t]
\centering
\includegraphics[width=\textwidth]{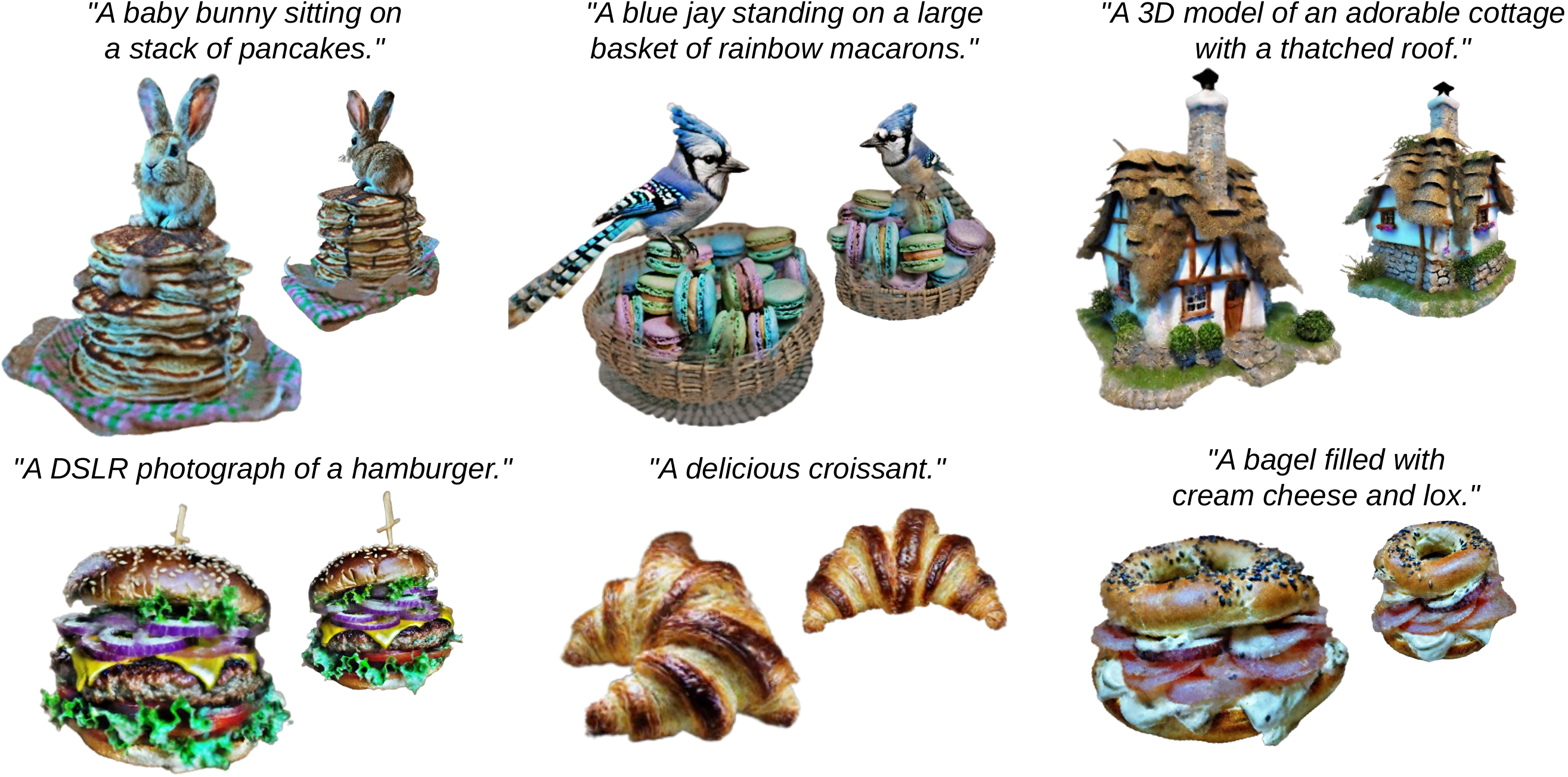}
\vspace{-10pt}
\caption{Examples of 3D objects generated using PFD.}
\label{fig:teaser}
\end{figure}

In parallel, Score Distillation with Inversion (SDI)~\cite{sdi} improves the optimization dynamics by reinterpreting the SDS formulation as a high-variance reparameterization of DDIM sampling. By replacing random noise additions with deterministic inversion, SDI produces sharper assets with more detailed textures compared to SDS. Nevertheless, SDI remains fundamentally limited in its ability to capture the full structural complexity of the target distribution.

We demonstrate that this limitation of SDI arises from its dependence on the posterior mean estimator, which reduces to a coarse single-step Euler approximation of the reverse diffusion trajectory. This observation motivates a generalization: replacing the biased single-step evaluation with exact integration of the Probability-Flow ODE (PF-ODE). Although SDS-Bridge~\cite{sdsbridge}, based on Dual Diffusion Implicit Bridges~\cite{ddib}, alludes to a closely related extension, it neither develops this viewpoint into a primary algorithmic approach nor formalizes it as a sampling method. Through this perspective, we additionally provide an interpretation for the use of negative Classifier-Free Guidance scale in SDI.

Building on this observation, we propose \textit{Probability-Flow Distillation} (PFD), a deterministic procedure to sample in parameter space that constructs the gradient by integrating the PF-ODE in both forward and reverse directions. Consistent with previous work (e.g. SDS, VSD), we use the term “distillation” to denote the extraction of information from diffusion models to guide optimization in parameter space, rather than the distillation of student-teacher aimed at accelerating inference. We establish that PFD induces a Wasserstein gradient flow, minimizing a time-averaged KL divergence between the generated and target distributions, enabling distribution alignment.

We extend PFD to text-to-3D using an approximation akin to those implicitly employed in prior work, thereby removing the need for model adaption to learn the score of the current parameter distribution, while preserving performance. PFD produces higher-quality 3D objects (see Figure~\ref{fig:teaser}) with improved texture details and visual appeal compared to SDI. Although it incurs additional per-iteration cost due to solving the PF-ODE in both directions, it requires fewer iterations than SDI to achieve better results. Furthermore, PFD achieves visual fidelity similar to or better than VSD while maintaining memory usage comparable to SDI.

In summary, our main contributions are as follows:
\begin{itemize}
    \item We show that DDIM’s posterior mean estimator corresponds to a coarse single-step Euler discretization of the reverse diffusion trajectory, which limits the sampling capacity of SDI (Section~\ref{sec:grad}).
    
    \item We propose Probability-Flow Distillation, replacing this approximation with exact integration of the PF-ODE, and show that this induces a Wasserstein gradient flow and gives rise to a principled sampling procedure (Section~\ref{sec:pfd}).
    
    \item We apply PFD to text-to-3D generation and demonstrate competitive generation quality relative to state-of-the-art distillation methods (Section~\ref{sec:3dgen}).
\end{itemize}

%% file: sections/2_prelim.tex
\section{Preliminaries}
\label{sec:preliminaries}

\paragraph{Probability Flow ODE.}
Consider the Itô SDE, $d\mathbf{x}_t = \mathbf{f}(\mathbf{x}_t,t)\,dt + g(t)\,d\mathbf{W}_t$, where $\mathbf{W}_t$ denotes the standard Brownian motion, and $p_t(\mathbf{x}_t)$ denotes the marginal density of $\mathbf{x}_t \in \mathbb{R}^d$. Anderson~\cite{anderson} derived the corresponding reverse-time SDE:
\[
d\mathbf{x}_t = \left[\mathbf{f}(\mathbf{x}_t,t) - g(t)^2 \nabla_{\mathbf{x}} \log p_t(\mathbf{x}_t)\right]\,dt + g(t)\,d\bar{\mathbf{W}}_t,
\]
where $\bar{\mathbf{W}}_t$ is a reverse-time Brownian motion. The quantity $\nabla_{\mathbf{x}} \log p_t(\mathbf{x})$, known as the \emph{score function}, is generally intractable but can be approximated via denoising score matching~\cite{vincent}. By training a neural network $\boldsymbol{\epsilon}_\phi(\mathbf{x}_t, t)$ to predict the noise injected during the forward process, the score is recovered up to a time-dependent scaling factor, i.e., $\nabla_{\mathbf{x}} \log p_t(\mathbf{x}_t) \propto -\boldsymbol{\epsilon}_\phi(\mathbf{x}_t,t)$. The stochastic dynamics can be recast as a deterministic flow that preserves the same marginal densities $p_t(\mathbf{x}_t)$ for all $t$, provided consistent initial conditions. This yields the \emph{Probability-Flow ODE} (PF-ODE)~\cite{song2021scorebased}:
\[
\frac{d\mathbf{x}_t}{dt} = \mathbf{f}(\mathbf{x}_t,t) - \frac{1}{2} g(t)^2 \nabla_{\mathbf{x}} \log p_t(\mathbf{x}_t).
\]

\paragraph{Diffusion Models.}
Modern diffusion models are commonly formulated via the Variance-Preserving (VP) SDE~\cite{song2021scorebased}. The forward diffusion process is defined by the stochastic differential equation: $\mathrm{d}\mathbf{x}_t = -\frac{1}{2}\beta_t \mathbf{x}_t \,\mathrm{d}t + \sqrt{\beta_t}\,\mathrm{d}\mathbf{W}_t,$ where $\beta_t$ controls the noise schedule. This continuous-time dynamics yields the discrete transition kernel $\mathbf{x}_t = \sqrt{\alpha_t}\,\mathbf{x}_0 + \sqrt{1-\alpha_t}\,\boldsymbol{\epsilon}$, where $\boldsymbol{\epsilon}\sim\mathcal{N}(0,I)$ and $\alpha_t = \exp(-\int_0^t\!\beta(s) \,\mathrm{d}s)$. The PF-ODE linked to this VP-SDE precisely matches the deterministic DDIM-ODE~\cite{ddim}.

By introducing the signal-to-noise ratio $\sigma_t \coloneqq \sqrt{(1-\alpha_t)/\alpha_t}$ and a scaled state variable $\tilde{\mathbf{x}}_t \coloneqq \mathbf{x}_t/\sqrt{\alpha_t}$, the forward transition can be simplified to $\tilde{\mathbf{x}}_t = \mathbf{x}_0 + \sigma_t\,\boldsymbol{\epsilon}$. Using this reparameterization, the governing DDIM PF-ODE takes on a remarkably concise form (See Appendix~\ref{sec:ddimode}):
\[
\frac{\mathrm{d}\tilde{\mathbf{x}}_t}{\mathrm{d}t} = \boldsymbol{\epsilon}_\phi(\mathbf{x}_t,t)\,\frac{\mathrm{d}\sigma_t}{\mathrm{d}t}, \quad \mathbf{x}_t = \frac{\tilde{\mathbf{x}}_t}{\sqrt{1+\sigma_t^2}}.
\]
The noise-prediction network $\boldsymbol{\epsilon}_\phi(\mathbf{x}_t,t)$ is trained using the standard score-matching objective $\mathbb{E}_{\mathbf{x}_0,\boldsymbol{\epsilon},t} \left[ \|\boldsymbol{\epsilon} - \boldsymbol{\epsilon}_\phi(\mathbf{x}_t,t)\|^2 \right]$. Given the deterministic nature of the DDIM-ODE, the backward-time integration (i.e. from $t=0$ to $t=T$) uniquely maps a clean observation $\mathbf{x}_0$ to its corresponding noise representation $\mathbf{x}_T$. This technique, commonly referred to as DDIM inversion~\cite{ddiminv, nulltextinv}, serves as the foundational mechanism for numerous text-guided image editing methodologies.

\paragraph{Wasserstein Gradient Flows.}
We consider $\mathcal{P}_2(\mathbb{R}^d)$, the space of Borel probability measures possessing a finite second moment, endowed with the 2-Wasserstein metric $W_2$. For a given functional $\mathcal{F} : \mathcal{P}_2(\mathbb{R}^d) \to \mathbb{R}$, the Wasserstein gradient flow~\cite{villani, figalli, ambrosio} $(q_t)_{t \ge 0} \subset \mathcal{P}_2(\mathbb{R}^d)$ represents the path of steepest descent for $\mathcal{F}$ with respect to the Riemannian geometry of $W_2$. It is governed by the following continuity equation:
\[\partial_t q_t(\mathbf{x}_t) = - \operatorname{grad}_{W_2} \mathcal{F}[q_t] = -\nabla_{\mathbf{x}} \cdot \bigl(q_t(\mathbf{x}_t)\, \mathbf{v}_t(\mathbf{x}_t)\bigr).\]
driven by the velocity field $\mathbf{v}_t(\mathbf{x}_t) = - \nabla_{\mathcal{W}} \mathcal{F}[q_t](\mathbf{x}_t).$
Both $\operatorname{grad}_{W_2} \mathcal{F}$ and $\nabla_{\mathcal{W}} \mathcal{F}$ are commonly referred to as the \emph{Wasserstein gradient} in the literature. Under appropriate regularity conditions, the latter admits the representation
$\nabla_{\mathcal{W}} \mathcal{F}[q_t](\mathbf{x}_t)
= \nabla_{\mathbf{x}} \frac{\delta \mathcal{F}}{\delta q_t}(\mathbf{x}_t)$.
To illustrate, consider $\mathcal{F}[q_t] = D_{\mathrm{KL}}(q_t \| p)$, the first variation yields
$\frac{\delta \mathcal{F}}{\delta q_t} = \log \frac{q_t}{p} + 1$ (See Appendix~\ref{subsec:part1}).
This gives the Wasserstein gradient as
$\nabla_{\mathbf{x}} \frac{\delta \mathcal{F}}{\delta q_t}(\mathbf{x}_t)
= \nabla_{\mathbf{x}} \log q_t(\mathbf{x}_t) - \nabla_{\mathbf{x}} \log p(\mathbf{x}_t)$,
and the resulting particle dynamics:
\[
\frac{d\mathbf{x}_t}{dt} = \mathbf{v}_t(\mathbf{x}_t) = -\nabla_{\mathbf{x}} \frac{\delta \mathcal{F}[q_t]}{\delta q_t(\mathbf{x}_t)}(\mathbf{x}_t)= \nabla_{\mathbf{x}} \log p(\mathbf{x}_t) - \nabla_{\mathbf{x}} \log q_t(\mathbf{x}_t).
\]
precisely matches the PF-ODE associated with the overdamped Langevin SDE targeting the distribution $p$~\cite{odlangevin}.

\paragraph{Classifier-Free Guidance.}
Classifier-Free Guidance (CFG)~\cite{cfg} controls conditioning strength without requiring a separate classifier. During training, the conditioning signal $c$ (e.g., a text prompt) is randomly dropped, allowing the network to learn both a conditional predictor $\boldsymbol{\epsilon}_\phi(\mathbf{x}_t,t,c)$ and an unconditional predictor $\boldsymbol{\epsilon}_\phi(\mathbf{x}_t,t,\varnothing)$. At inference, these predictions are linearly extrapolated to form the guided noise estimate:
\[
\hat{\boldsymbol{\epsilon}}_\phi(\mathbf{x}_t,t,c) = \boldsymbol{\epsilon}_\phi(\mathbf{x}_t,t,\varnothing) + \gamma\big(\boldsymbol{\epsilon}_\phi(\mathbf{x}_t,t,c) - \boldsymbol{\epsilon}_\phi(\mathbf{x}_t,t,\varnothing)\big),
\]
where the guidance scale $\gamma$ determines the conditioning strength. While $\gamma=1$ recovers the standard conditional model, $\gamma > 1$ amplifies alignment with the condition $c$, typically at the expense of sample diversity. CFG remains the standard conditioning mechanism in modern text-to-image architectures.

%% file: sections/3_text-to-3d.tex
\section{Distilling 2D into 3D}
\label{sec:text23d}

\paragraph{Score Distillation Sampling.}
Score Distillation Sampling (SDS)~\cite{sds, sjc} leverages a pretrained 2D diffusion model as a prior to optimize a parameterized 3D representation $\theta$. At each optimization step, a timestep $t$ and camera pose $c$ are sampled, an image $\mathbf{x}_0$ is rendered, and Gaussian noise $\boldsymbol{\epsilon}$ is added to obtain a noisy sample $\mathbf{x}_t$. The SDS gradient is computed as
\[
\nabla_\theta \mathcal{L}_{\text{SDS}} = \mathbb{E}_{t,\boldsymbol{\epsilon},c} \left[ w(t) \big(\boldsymbol{\epsilon}_\phi(\mathbf{x}_t, t, c) - \boldsymbol{\epsilon}\big) \frac{\partial \mathbf{x}_0}{\partial \theta} \right],
\]
where the noise-prediction network is treated with a stop-gradient operation during optimization. While SDS has enabled text-to-3D generation by distilling 2D Diffusion models, it typically relies on large CFG scales, often resulting in over-smoothed geometry and over-saturated textures. To mitigate these artifacts, Magic3D~\cite{magic3d} introduces a coarse-to-fine pipeline that refines a low-resolution NeRF into a high-resolution mesh. Fantasia3D~\cite{fantasia3d} improves fidelity by disentangling geometry and appearance via hybrid representations, while HiFA~\cite{hifa} adopts a suite of improvements—including square-root timestep annealing, image-space SDS, a $z$-variance loss, coarse-to-fine NeRF sampling, and kernel smoothing—to improve stability and generation quality.

\paragraph{Score Distillation via Inversion.}
Score Distillation with Inversion (SDI)~\cite{sdi} and the closely related Interval Score Matching (ISM)~\cite{ism}, modifies the SDS formulation by replacing stochastic noise perturbations with a deterministic DDIM inversion trajectory. Specifically, instead of sampling $\boldsymbol{\epsilon}$, SDI computes a deterministic inversion noise $\boldsymbol{\epsilon}_{\text{inv}}$ from the current rendering, effectively reducing variance in the gradient estimate. This yields sharper results with enhanced texture detail compared to standard SDS.

\paragraph{Variational Score Distillation.}
ProlificDreamer~\cite{vsd} formulates text-to-3D generation within the framework of particle variational inference via Variational Score Distillation (VSD). In this approach, the gradient is defined as the difference between the noise prediction of the base diffusion model and that of a model adapted to the distribution of current renderings. This induces a Wasserstein gradient flow over the particle distribution, encouraging alignment with the target diffusion prior. However, VSD requires additional model adaptation, resulting in increased computational and memory overhead.

\paragraph{3D-Aware Priors and the Janus Problem.}
A separate line of work addresses the Janus problem, a multi-view inconsistency arising from ambiguous 2D supervision. Methods such as Zero-1-to-3~\cite{zero123}, SyncDreamer~\cite{syncdreamer}, Wonder3D~\cite{wonder3d}, and MVDream~\cite{mvdream} improve cross-view consistency through view-conditioned generation, often leveraging models adapted or trained using 3D-aware data. These approaches are orthogonal to our work, which instead focuses on improving the underlying distillation objective. In practice, we incorporate Perp-Neg~\cite{perpneg} as a simple mitigation for Janus artifacts.

%% file: sections/4_grad.tex
\section{Motivating PFD from SDI}
\label{sec:grad}

\begin{figure}[t]
\centering
\includegraphics[width=\textwidth]{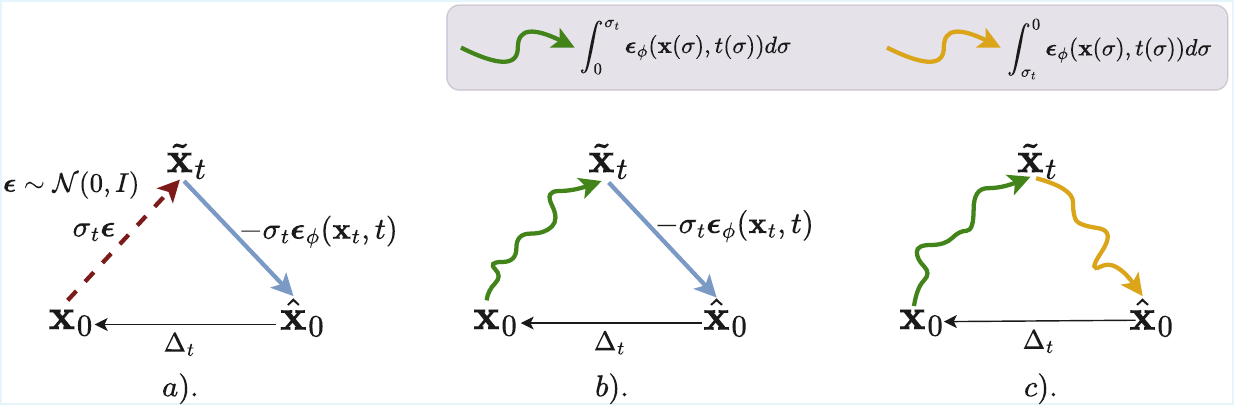}
\caption{Comparison of different distillation gradients. 
(a) \textbf{SDS}: A noisy sample $\tilde{\mathbf{x}}_t$ is obtained via Gaussian perturbation of $\mathbf{x}_0$, and the reverse mapping is approximated using a single-step estimate (posterior mean). 
(b) \textbf{SDI}: The forward mapping from $\mathbf{x}_0$ to $\tilde{\mathbf{x}}_t$ is computed by deterministic integration of the PF-ODE, whereas the reverse mapping remains a single-step approximation. 
(c) \textbf{PFD (Ours)}: Both forward and reverse trajectories are computed via integration of the PF-ODE, eliminating the posterior mean approximation.}
\label{fig:gradient-interpretation}
\end{figure}

In this section, we develop the geometric intuition leading to PFD by first examining SDS and SDI, before presenting the formal PFD algorithm in the subsequent section. To simplify the exposition, we assume the identity rendering map, $\frac{\partial \mathbf{x}_0}{\partial \theta} = I$, i.e., the images themselves serve as parameters to be optimized. We adopt the particle variational inference perspective~\cite{svgd, liu, chen} and treat images as particles. Let $q_0^\tau$ denote the empirical particle distribution at optimization iteration $\tau$. At each iteration, a particle $\mathbf{x}_0 \sim q_0^\tau$ is sampled, its gradient is evaluated, and the particle is updated accordingly. We begin with the following observation.

\begin{claim*}
\textit{
The DDIM posterior mean estimate of the clean sample $\mathbf{x}_0$ conditioned on a noisy observation $\mathbf{x}_t$, given by
\[
\mathbb{E}[\mathbf{x}_0 \mid \mathbf{x}_t]
= \frac{\mathbf{x}_t - \sqrt{1-\alpha_t}\,\boldsymbol{\epsilon}_\phi(\mathbf{x}_t, t)}{\sqrt{\alpha_t}}
= \tilde{\mathbf{x}}_t - \sigma_t \boldsymbol{\epsilon}_\phi(\mathbf{x}_t, t),
\]
admits an interpretation as a single explicit (forward Euler) discretization step of the DDIM PF-ODE evaluated at $(\mathbf{x}_t, t)$, integrating along the reverse-time trajectory from $t$ to $0$.
}
\end{claim*}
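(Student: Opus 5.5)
The plan is to work entirely in the scaled coordinates introduced in Section~\ref{sec:preliminaries}, where the DDIM PF-ODE reads $\frac{d\tilde{\mathbf{x}}_t}{dt} = \boldsymbol{\epsilon}_\phi(\mathbf{x}_t,t)\,\frac{d\sigma_t}{dt}$ (derived in Appendix~\ref{sec:ddimode}). The first step is to verify the second equality in the claim. Dividing $\mathbf{x}_t - \sqrt{1-\alpha_t}\,\boldsymbol{\epsilon}_\phi$ by $\sqrt{\alpha_t}$ gives $\tilde{\mathbf{x}}_t - \sqrt{(1-\alpha_t)/\alpha_t}\,\boldsymbol{\epsilon}_\phi = \tilde{\mathbf{x}}_t - \sigma_t\boldsymbol{\epsilon}_\phi$, directly from the definitions of $\sigma_t$ and $\tilde{\mathbf{x}}_t$. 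I will also note that at $t=0$ we have $\alpha_0=1$, so $\sigma_0=0$ and $\tilde{\mathbf{x}}_0=\mathbf{x}_0$. This identifies the endpoint of reverse integration with the clean sample.

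The key step is to reparameterize time by $\sigma$ itself. Since $\alpha_t$ is strictly decreasing (for $\beta>0$), $\sigma_t$ is strictly increasing, so $t\mapsto\sigma_t$ is invertible. By the chain rule the ODE then becomes
\[
\frac{d\tilde{\mathbf{x}}}{d\sigma} = \boldsymbol{\epsilon}_\phi\bigl(\mathbf{x}(\sigma),t(\sigma)\bigr), \qquad \mathbf{x} = \tilde{\mathbf{x}}/\sqrt{1+\sigma^2}.
\]
This gives a velocity field with no extra time-dependent factor. The exact reverse solution is
\[
\tilde{\mathbf{x}}_0 = \tilde{\mathbf{x}}_t - \int_0^{\sigma_t} \boldsymbol{\epsilon}_\phi\bigl(\mathbf{x}(s),t(s)\bigr)\,ds .
\]
A single explicit Euler step of size $\Delta\sigma = 0-\sigma_t$ takes the integrand frozen at its value at the starting point $(\mathbf{x}_t,t)$. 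This yields $\tilde{\mathbf{x}}_t - \sigma_t\boldsymbol{\epsilon}_\phi(\mathbf{x}_t,t)$, which is exactly the posterior mean expression. Because $\tilde{\mathbf{x}}_0=\mathbf{x}_0$, this approximates the clean sample.

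The main obstacle is the choice of integration variable. If the Euler step is taken in the original time $t$, it produces $\tilde{\mathbf{x}}_t - t\,\dot\sigma_t\,\boldsymbol{\epsilon}_\phi$, and $t\,\dot\sigma_t\neq\sigma_t$ in general. So the claim holds only for the discretization in the $\sigma$-clock, which is the natural DDIM parameterization. I will state this explicitly, observing that DDIM's update is itself exact-in-$\sigma$ Euler, so this is the canonical discretization of the PF-ODE. Finally, I will remark that the gap between the Euler step and the exact integral is $\int_0^{\sigma_t}\bigl(\boldsymbol{\epsilon}_\phi(\mathbf{x}_t,t)-\boldsymbol{\epsilon}_\phi(\mathbf{x}(s),t(s))\bigr)ds$. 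This gap vanishes precisely when $\boldsymbol{\epsilon}_\phi$ is constant along the trajectory, which motivates replacing the single step with full integration as PFD does.
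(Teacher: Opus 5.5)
Your proposal is correct and is essentially the paper's proof. You reparameterize the DDIM-ODE by $\sigma$ so that $d\tilde{\mathbf{x}}/d\sigma = \boldsymbol{\epsilon}_\phi(\mathbf{x}(\sigma),t(\sigma))$, then take one explicit Euler step of size $\Delta\sigma=-\sigma_t$ from $(\mathbf{x}_t,t)$. Your extra checks make explicit what the paper leaves implicit: the algebraic identity, $\tilde{\mathbf{x}}_0=\mathbf{x}_0$ because $\sigma_0=0$, invertibility of $t\mapsto\sigma_t$, and the point that an Euler step in the original $t$-clock would give $t\,\dot\sigma_t\neq\sigma_t$.

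One small nit concerns your closing remark. The error $\int_0^{\sigma_t}\bigl(\boldsymbol{\epsilon}_\phi(\mathbf{x}_t,t)-\boldsymbol{\epsilon}_\phi(\mathbf{x}(s),t(s))\bigr)\,ds$ vanishes \emph{if} $\boldsymbol{\epsilon}_\phi$ is constant along the trajectory, but not \emph{only if}. The integral can cancel without the integrand being identically zero.
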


\begin{proof}
Consider the DDIM-ODE from Section~\ref{sec:preliminaries}. Under a reparameterization in terms of the noise-to-signal level $\sigma$, the state trajectory $\tilde{\mathbf{x}}(\sigma)$ evolves according to $\frac{d\tilde{\mathbf{x}}(\sigma)}{d\sigma} = \boldsymbol{\epsilon}_\phi(\mathbf{x}(\sigma), t(\sigma)).$
To estimate the clean sample $\mathbf{x}_0$ (corresponding to $\sigma = 0$) from the noisy state $\tilde{\mathbf{x}}_t$ at $\sigma_t$, we approximate the integral using a single forward Euler step with step size $\Delta \sigma = -\sigma_t$:
\[
\mathbf{x}_0 \approx \tilde{\mathbf{x}}_t + \Delta \sigma \frac{d\tilde{\mathbf{x}}(\sigma)}{d\sigma}\bigg|_{\sigma=\sigma_t}
= \tilde{\mathbf{x}}_t - \sigma_t \boldsymbol{\epsilon}_\phi(\mathbf{x}_t, t).
\]
This shows that the posterior mean is mathematically equivalent to a single Euler step along the reverse PF-ODE trajectory.
\end{proof}

\subsection{SDS Gradient.}
\label{subsec:sdsgrad}
In SDS, the gradient is given by the difference between the predicted noise $\boldsymbol{\epsilon}_\phi$ and the injected noise $\boldsymbol{\epsilon}$. Using the $\sigma$-parameterization,
\[
\boldsymbol{\epsilon} = \frac{\tilde{\mathbf{x}}_t - \mathbf{x}_0}{\sigma_t}, \quad
\boldsymbol{\epsilon}_\phi = \frac{\tilde{\mathbf{x}}_t - \hat{\mathbf{x}}_0}{\sigma_t},
\]
where $\hat{\mathbf{x}}_0$ denotes the clean estimate obtained via the posterior mean. The resulting gradient is therefore proportional to the residual $\Delta_t = \mathbf{x}_0 - \hat{\mathbf{x}}_0$. Geometrically (Figure~\ref{fig:gradient-interpretation}a), $\mathbf{x}_0$ is mapped to $\tilde{\mathbf{x}}_t$ by adding noise, and the reverse mapping to $\hat{\mathbf{x}}_0$ corresponds to a single Euler step along the PF-ODE induced by the diffusion prior $\{p_t\}_t$. The gradient thus corresponds to the displacement vector from $\hat{\mathbf{x}}_0$ to $\mathbf{x}_0$.

\subsection{SDI Gradient.}
\label{subsec:sdigrad}

In SDI, the gradient remains proportional to the direction $\mathbf{x}_0 - \hat{\mathbf{x}}_0$, while the forward process is modified. Rather than obtaining $\tilde{\mathbf{x}}_t$ through stochastic perturbation, SDI employs deterministic DDIM inversion to map $\mathbf{x}_0$ to $\tilde{\mathbf{x}}_t$. The reverse mapping in SDI, however, still relies on a single step Euler approximation to recover $\hat{\mathbf{x}}_0$ (Figure~\ref{fig:gradient-interpretation}b).

\paragraph{DDIM Inversion as PF-ODE Flow of $\{q_t^\tau\}_t$.}
In practice, however, SDI does not explicitly model or adapt to the score $\nabla_{\mathbf{x}}\log q_t^\tau(\mathbf{x})$---where $q_t^\tau$ denotes the forward noised distribution of the current particles---as would be done in VSD. Instead, it directly applies the standard DDIM inversion procedure, which can be retrospectively interpreted as integrating the PF-ODE associated with the marginals $\{q_t^\tau\}_t$, under the notion that the unconditioned diffusion model serves as a surrogate for the score of the particle distribution.

This approximation is implicit in several related formulations. In particular, CSD can be viewed as an approximation to VSD, where $\boldsymbol{\epsilon}_{\phi}(\mathbf{x}_t,t,\varnothing)$ substitutes for the score of the current particle distribution rather than the adapted prediction $\boldsymbol{\epsilon}_{\mathrm{LoRA}}(\mathbf{x}_t,t,c)$. In the text-to-3D setting, this approximation can be partially justified by the observation that rendered images typically remain close to the manifold of natural images, making the approximation $\nabla_{\mathbf{x}} \log q_t^\tau(\mathbf{x}) \approx \nabla_{\mathbf{x}} \log p_t(\mathbf{x}\mid \varnothing)$ reasonable.

\paragraph{Explanation of Negative CFG in SDI.}
Under the above interpretation, the null-text-conditioned score used in the DDIM inversion may be viewed as the source prediction $\boldsymbol{\epsilon}_{\mathrm{src}} = \boldsymbol{\epsilon}_{\phi}(\mathbf{x}_t,t,\varnothing)$, while the target-conditioned prediction is given by $\boldsymbol{\epsilon}_{\mathrm{tgt}} = \boldsymbol{\epsilon}_{\phi}(\mathbf{x}_t,t,c)$. Thus, under standard CFG, the resulting noise prediction takes the form
$\boldsymbol{\epsilon}_{\mathrm{src}} + \gamma(\boldsymbol{\epsilon}_{\mathrm{tgt}} - \boldsymbol{\epsilon}_{\mathrm{src}})$,
which steers the trajectory toward the target distribution along the direction $(\boldsymbol{\epsilon}_{\mathrm{tgt}} - \boldsymbol{\epsilon}_{\mathrm{src}})$. Rewriting the expression gives,
\[
\boldsymbol{\epsilon}_{\mathrm{src}} + \gamma(\boldsymbol{\epsilon}_{\mathrm{tgt}} - \boldsymbol{\epsilon}_{\mathrm{src}})
=
\boldsymbol{\epsilon}_{\mathrm{tgt}} + (1-\gamma)(\boldsymbol{\epsilon}_{\mathrm{src}} - \boldsymbol{\epsilon}_{\mathrm{tgt}}).
\]
Consequently, exchanging the roles of source and target distributions is equivalent to replacing the guidance scale $\gamma$ with $1-\gamma$. Therefore, realizing the symmetric form
\[
\boldsymbol{\epsilon}_{\mathrm{tgt}} + \gamma(\boldsymbol{\epsilon}_{\mathrm{src}} - \boldsymbol{\epsilon}_{\mathrm{tgt}})
\]
with an effective guidance scale $\gamma$ corresponds to a user-specified CFG guidance scale $\gamma_{\mathrm{cfg}} = 1-\gamma$. For $\gamma > 1$, this gives $\gamma_{\mathrm{cfg}} < 0$, explaining the negative CFG scales employed in SDI.

\bigskip

This perspective naturally motivates our proposed extension (Figure~\ref{fig:gradient-interpretation}c): replacing the single-step reverse approximation with numerical integration of the reverse-time PF-ODE trajectory. In the following section, we show that, under appropriate assumptions, this construction results in a well-defined sampling procedure.

%% file: sections/5_pfd.tex
\section{Probability-Flow Distillation}
\label{sec:pfd}

Building on the analysis in the previous section, we formally introduce \textit{Probability-Flow Distillation} as a general sampling-by-distillation framework. The objective is to optimize a distribution $q_0 \in \mathcal{P}(\mathbb{R}^d)$, represented empirically by a finite ensemble of particles $\{\mathbf{x}_0^{(i)}\}_{i=1}^N$, such that $q_0$ converges toward the target distribution $p_0$ learned by the diffusion model.

\paragraph{PFD Algorithm.}
The PFD algorithm is summarized in Appendix~\ref{sec:algo}. As in standard SDS-based methods, we first sample a time step $t \sim \mathcal{U}(0,T)$. We then simulate the forward PF-ODE associated with the marginals $\{q_t^\tau\}_{t \in [0,T]}$, mapping the particle from $\mathbf{x}_0$ to an intermediate noisy state $\mathbf{x}_t$. From $\mathbf{x}_t$, we subsequently simulate the reverse PF-ODE corresponding to the target marginals $\{p_t\}_{t \in [0,T]}$, producing a predicted clean sample $\hat{\mathbf{x}}_0$. When implemented through automatic differentiation using a surrogate loss, both PF-ODEs are solved under a stop-gradient operation so that gradients are not backpropagated through the score evaluations along the trajectories. As established in Section~\ref{sec:grad}, the discrepancy $\Delta_t$ defines a stochastic gradient estimator whose expectation governs the evolution of the particles.

\paragraph{Theoretical Characterization.}
In Theorem~\ref{thm:pfd_wasserstein}, we establish that the PFD gradient $\mathbb{E}_t[\Delta_t]$ coincides with the Wasserstein gradient of a time-averaged KL divergence functional between the marginals $q_t^\tau$ and $p_t$. Consequently, the induced evolution of $q_0^\tau$ follows a gradient flow in Wasserstein space. Moreover, the functional attains its global minimum at $q_0^\tau = p_0$ (see Appendix~C.4.1 of~\cite{vsd}), implying convergence of the particle ensemble toward the target distribution under the induced dynamics.

\bigskip

\begin{theorem}[PFD as Exact Wasserstein Gradient Flow]
\label{thm:pfd_wasserstein}
Assume that both the forward and reverse PF-ODEs are governed by a linear drift term $\mathbf{f}(\mathbf{x}, t) = a(t)\mathbf{x}$ and a scalar diffusion coefficient $g(t)$. Then, the gradient $\mathbb{E}_{t}[\Delta_t]$ defined by PFD exactly matches the Wasserstein gradient of a weighted time-averaged KL divergence functional $\mathcal{F}[q_0]$:
\[
    \mathbb{E}_{t}[\Delta_t] = \nabla_{\mathbf{x}} \left( \frac{\delta \mathcal{F}[q_0]}{\delta q_0(\mathbf{x})} \right)(\mathbf{x}_0),
\]
where the functional is given by $\mathcal{F}[q_0] = \mathbb{E}_{t}[w(t) D_{\mathrm{KL}}(q_t\|p_t)]$. Here, $q_t$ denotes the marginal distribution of $q_0$ perturbed to time $t$, $\{p_t\}_{t \in [0,1]}$ represents the target diffusion prior, and the weighting function is given by $w(t) = \frac{1}{2} (T-t) g(t)^2 c(t,0)^2$ with the scaling factor $c(t,0) = \exp\left(-\int_0^t a(s)\mathrm{d}s\right)$.
\end{theorem}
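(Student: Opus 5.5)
We want $\Delta_t=\mathbf{x}_0-\hat{\mathbf{x}}_0$, where $\hat{\mathbf{x}}_0$ comes from flowing $\mathbf{x}_0$ forward to time $t$ under the PF-ODE of $\{q_s\}$ and back to $0$ under the PF-ODE of $\{p_s\}$. The idea is to split this round trip into infinitesimal pieces and pull each piece back to time $0$.

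\textbf{Round-trip identity.} Because the drift is linear, $\mathbf{f}=a(s)\mathbf{x}$, the linear part of both PF-ODEs is identical. It can be removed by the integrating factor $c(s,0)=\exp(-\int_0^s a)$. Setting $\mathbf{y}_s=c(s,0)\mathbf{x}_s$, the forward flow becomes $d\mathbf{y}/ds=-\tfrac12 g(s)^2 c(s,0)\nabla\log q_s(\mathbf{x}_s)$. The reverse flow has the same form with $p_s$ in place of $q_s$.

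Write $\Phi^q_{s\to r}$ and $\Phi^p_{s\to r}$ for the two flow maps. Then
\[
\hat{\mathbf{x}}_0-\mathbf{x}_0=\Phi^p_{t\to0}\Phi^q_{0\to t}(\mathbf{x}_0)-\mathbf{x}_0=\int_0^t \frac{d}{ds}\Big[\Phi^p_{s\to 0}\big(\Phi^q_{0\to s}(\mathbf{x}_0)\big)\Big]\,ds .
\]
Differentiating in $s$ gives $D\Phi^p_{s\to0}$ applied to the velocity mismatch $\tfrac12 g(s)^2(\nabla\log p_s-\nabla\log q_s)(\mathbf{x}_s)$. The drift terms cancel because they coincide.

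\textbf{Simplifying the Jacobian.} Here I would use the approximation regime implicit in the paper, where $q_s\approx p_s$ along the trajectory, or equivalently a linearization of the flow. In that regime I would replace $D\Phi^p_{s\to0}$ by its linear-drift part $c(s,0)$, and the paper's weight $w(t)$ carries this factor squared. I would also identify $\mathbf{x}_s$ with the pushforward of $\mathbf{x}_0$, so that $\nabla_{\mathbf{x}_0}$ of a function of $\mathbf{x}_s$ picks up one more factor of $c(s,0)$ from the chain rule. The factor $c(s,0)^2$ then arises as one factor from pulling back the displacement and one from the chain rule.

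This gives
\[
\Delta_t=\int_0^t \tfrac12 g(s)^2 c(s,0)^2\,\nabla_{\mathbf{x}_0}\Big[\log q_s(\mathbf{x}_s)-\log p_s(\mathbf{x}_s)\Big]\,ds .
\]

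\textbf{Averaging over $t$.} With $t\sim\mathcal U(0,T)$, Fubini gives $\int_0^T\!\int_0^t(\cdot)\,ds\,dt=\int_0^T (T-s)(\cdot)\,ds$. This produces exactly $w(s)=\tfrac12(T-s)g(s)^2c(s,0)^2$, up to the normalization $1/T$, which I would absorb into $\mathbb{E}_t$.

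\textbf{First variation.} It remains to show that $\nabla_{\mathbf{x}_0}\frac{\delta}{\delta q_0}D_{\mathrm{KL}}(q_s\|p_s)$ equals $\nabla_{\mathbf{x}_0}\log\frac{q_s}{p_s}(\mathbf{x}_s)$, with $\mathbf{x}_s$ the image of $\mathbf{x}_0$. I would compute the first variation of $q_0\mapsto D_{\mathrm{KL}}(q_s\|p_s)$ by perturbing $q_0\to q_0+\varepsilon\chi$. The perturbation propagates to $q_s$ through the map $q_0\mapsto q_s$, which is the linear Fokker–Planck semigroup. Hence
\[
\frac{\delta}{\delta q_0}=\mathbb{E}\Big[\log\frac{q_s}{p_s}(\mathbf{x}_s)+1\;\Big|\;\mathbf{x}_0\Big],
\]
following the KL computation of Appendix~\ref{subsec:part1}. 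Along the deterministic PF-ODE coupling, this conditional expectation collapses to evaluation at $\mathbf{x}_s=\Phi^q_{0\to s}(\mathbf{x}_0)$. Taking $\nabla_{\mathbf{x}_0}$ and exchanging it with $\mathbb{E}_t$ by dominated convergence gives the theorem.

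\textbf{Main obstacle.} The hard part is the Jacobian step. Exact equality needs $D\Phi^p_{s\to0}$ to act as the scalar $c(s,0)$, and the chain rule for $\nabla_{\mathbf{x}_0}$ must yield the same scalar rather than the full Jacobian $D\Phi^q_{0\to s}$. I would first try to show that the full Jacobians cancel between the pullback and the chain rule in the regime $q_s\approx p_s$. If that fails, I would state the result as holding to first order in $q_s-p_s$, which is the regime the paper's surrogate assumption already invokes. A secondary subtlety is justifying the collapse of the conditional expectation under the deterministic coupling.
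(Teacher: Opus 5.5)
Your skeleton matches the paper's. You differentiate the round trip $\Phi^p_{s\to0}\circ\Phi^q_{0\to s}$ in $s$ (the paper's Lemma~5), cancel the shared linear drift, pull the score mismatch back through the reverse-flow Jacobian, and use Fubini to get the factor $(T-s)$.

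\textbf{Gap 1: the Jacobian step.} This is the step you flag, and your justification and both fallbacks fail. Closeness $q_s\approx p_s$ does not make $D\Phi^p_{s\to0}$ scalar, and it is not equivalent to linearizing the flow. If $q_s=p_s$ the round trip is the identity, so $D\Phi^p_{s\to0}(\mathbf{x}_s)=M^{-1}$ with $M=D\Phi^q_{0\to s}(\mathbf{x}_0)$. The chain rule on the Wasserstein side instead produces $M^{\top}$. These agree up to a scalar only when $M$ is a scalar multiple of an orthogonal matrix. That already fails for data $\mathcal{N}(0,\Sigma)$ with anisotropic $\Sigma$ under the VE flow, where $M=(\Sigma+\sigma_s^2 I)^{1/2}\Sigma^{-1/2}$. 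So the full Jacobians do not cancel. Moreover, $\nabla\log(q_s/p_s)$ is itself first order in $q_s-p_s$, so the matrix mismatch enters at leading order, and a ``first-order'' version of the theorem is false too.

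What you are missing is the stop-gradient convention built into PFD (the paper's Lemma~3). The score is detached along both trajectories, so the Hessian term in the variational equation $\dot J=\bigl(a(s)I-\tfrac12 g(s)^2\nabla^2\log\rho_s\bigr)J$ is set to zero. That leaves $\dot J=a(s)J$, hence $D\Phi_\rho(t,s,\cdot)=c(s,t)I$ \emph{exactly} for both $\rho=p$ and $\rho=q$. Use this for the reverse pullback and for the forward chain rule. The identity is then exact within that convention, and the factor $c(s,0)^2$ in $w$ arises just as you describe.

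\textbf{Gap 2: the first-variation step.} The expression $\mathbb{E}[\log\frac{q_s}{p_s}(\mathbf{x}_s)+1\mid\mathbf{x}_0]$ you obtain from the linear Fokker--Planck semigroup is an expectation over the Gaussian SDE transition kernel. It does not collapse to evaluation at $\Phi^q_{0\to s}(\mathbf{x}_0)$ just because a deterministic coupling with the same marginals exists. The first variation is fixed by the map $q_0\mapsto q_s$, not by a choice of coupling, and the SDE and PF-ODE couplings share marginals but not conditionals.

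Already for $q_0=\mathcal{N}(0,vI)$, $p_0=\mathcal{N}(0,I)$ with $v\neq1$ under the VE flow, the two functions of $\mathbf{x}_0$ have different gradients. Their ratio is $(v+\sigma_s^2)/v$.

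To get $\log\frac{q_s}{p_s}(\mathbf{x}_s)+1$, argue as in the paper's Lemma~4. Write $q_s=(\Phi^q_{0\to s})_\# q_0$ and differentiate with the flow map held fixed. Then $\delta q_s(\mathbf{x})/\delta q_0(\mathbf{x}_0)$ is a Jacobian factor times $\delta(\mathbf{x}_0-\Phi_q(0,s,\mathbf{x}))$, which integrates to evaluation at $\mathbf{x}_s$. This again detaches a dependence, namely that of the flow map on $q_0$. State it as an explicit assumption, not as a consequence of determinism.
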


\begin{proof}
See Appendix~\ref{sec:proof}.
\end{proof}

\paragraph{Empirical Evaluation.}
To empirically validate the proposed method, we conduct a 2D experiment comparing SDS, SDI, and PFD on a toy dataset consisting of concentric circles in $\mathbb{R}^2$. In this setting, a primary noise-prediction network is pre-trained to model the score of the target diffusion prior $\{p_t\}_{t \in [0,T]}$. To simulate the forward PF-ODE required by SDI and PFD, a secondary network is trained at each optimization step to estimate the score of the current empirical marginals $\{q_t^{\tau}\}_{t \in [0,T]}$. We adopt the Variance Exploding (VE) SDE formulation for simplicity. Since both VE-SDE and VP-SDE are characterized by linear drift and time-dependent diffusion coefficients, the assumptions of Theorem~\ref{thm:pfd_wasserstein} hold exactly in this setting, as well as in the 3D generation experiments presented in the subsequent section.

\begin{figure}[ht]
\centering
\includegraphics[width=0.7\textwidth]{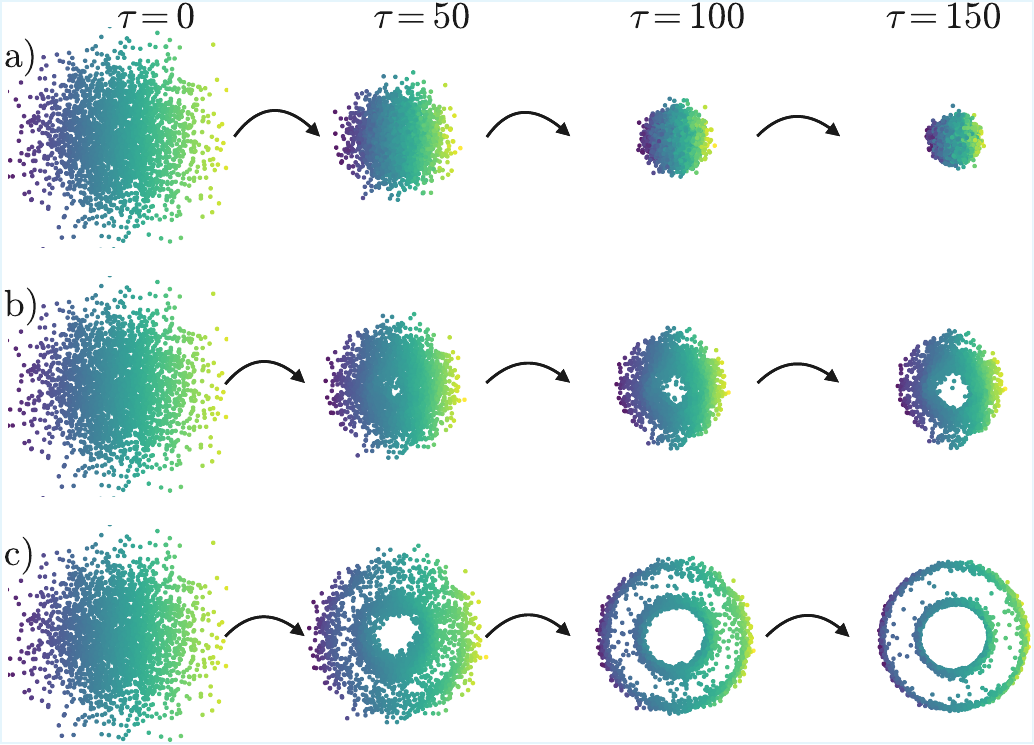}
\caption{Visualization of 2D particle evolution targeting a concentric circles dataset. The rows depict the empirical distributions across successive optimization steps ($\tau$) for (a) \textbf{SDS}, (b) \textbf{SDI}, and (c) \textbf{PFD (Ours)}.}
\label{fig:2d-results}
\end{figure}

\paragraph{Discussion.}
Figure~\ref{fig:2d-results} illustrates the evolution of the particle ensemble across optimization steps $\tau$. SDS rapidly collapses to a single mode, reflecting its tendency to optimize a point estimate rather than perform sampling. SDI partially alleviates this issue but still fails to capture the full support of the target distribution. In contrast, PFD successfully recovers the complete multi-modal structure. This geometric behavior in the 2D setting mirrors empirical observations in high-dimensional generative tasks: SDS often produces over-smoothed textures due to its mode-seeking nature; SDI provides partial improvements but the resulting 3D models still lack fine-grained details; in contrast, PFD, as a distribution-matching algorithm, preserves fine details (See Appendix~\ref{sec:additional}).

%% file: sections/6_experiments.tex
\section{Text-to-3D Generation with PFD}
\label{sec:3dgen}

\subsection{Implementation and Evaluation}
\label{subsec:implements}

\paragraph{Practical Approximations.}
We adapt PFD to the setting of text-to-3D generation using the approximation discussed in Section~\ref{subsec:sdigrad}. Specifically, rather than modeling the evolving particle distribution $q_0^\tau$ and learning its corresponding score function, we employ standard DDIM inversion. This eliminates the need to maintain a particle ensemble, allowing optimization to be performed in the single-particle regime ($N=1$). In practice, explicitly representing and evolving multiple particles together with their induced distribution would incur substantial computational overhead. Despite these approximations, PFD produces 3D assets with improved textural detail compared to prior approaches. An overview of the complete pipeline is shown in Figure~\ref{fig:pipeline}. A comprehensive description of the implementation details of both our method and the evaluated baselines, along with the licenses of the models and tools employed, is provided in Appendix~\ref{sec:details}.

\begin{figure}[ht]
\centering
\includegraphics[width=\textwidth]{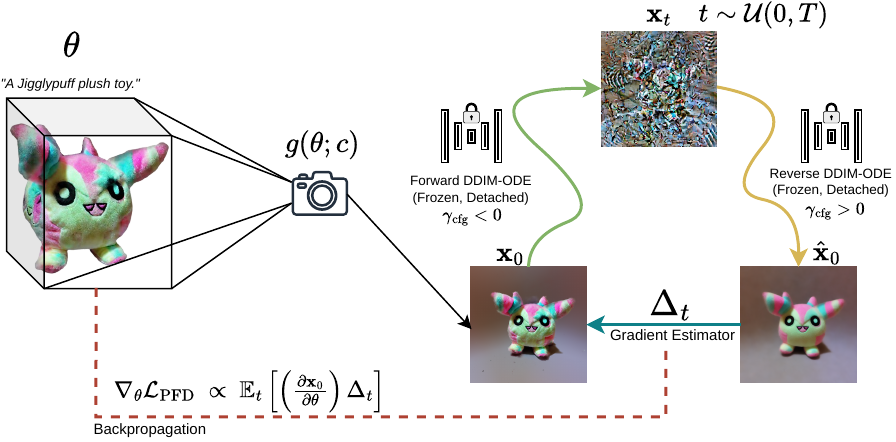}
\caption{Illustration of the proposed text-to-3D generation pipeline based on PFD. The DDIM-ODE is solved in both forward and reverse directions to compute the distillation gradient.}
\label{fig:pipeline}
\end{figure}

\paragraph{Comparison.}

We compare PFD with SDS, SDI, VSD, and CSD. SDS serves as the foundational baseline, SDI as the most closely related method upon which our approach builds, and VSD as a related distribution-matching framework, with CSD representing an approximation to VSD analogous to the approximation employed in our 3D instantiation of PFD. Qualitative comparisons are shown in Figure~\ref{fig:qualitative}, with additional results provided in Appendix~\ref{sec:additional}. Quantitatively, we report CLIP score~\cite{clipscore} and CLIP-IQA~\cite{clipiqa} metrics, including quality (“Good photo” vs.\ “Bad photo”), sharpness (“Sharp photo” vs.\ “Blurry photo”), and naturalness (“Natural photo” vs.\ “Synthetic photo”), computed using the ViT-B/32 backbone and torchmetrics~\cite{torchmetrics}, together with ImageReward (IR)~\cite{imagereward}. We additionally report VRAM usage and wall-clock runtime. All quantitative results are summarized in Table~\ref{tab:comparison}.

Notably, PFD achieves higher ImageReward scores than competing approaches, indicating stronger alignment with learned human preferences. Furthermore, PFD exhibits VRAM usage comparable to SDI, since gradients are not propagated through the diffusion model and its activations therefore do not need to be retained for backpropagation, thus the increased number of sequential U-Net evaluations to primarily affect runtime rather than memory usage.

\begin{figure}[ht]
\centering
\includegraphics[width=\textwidth]{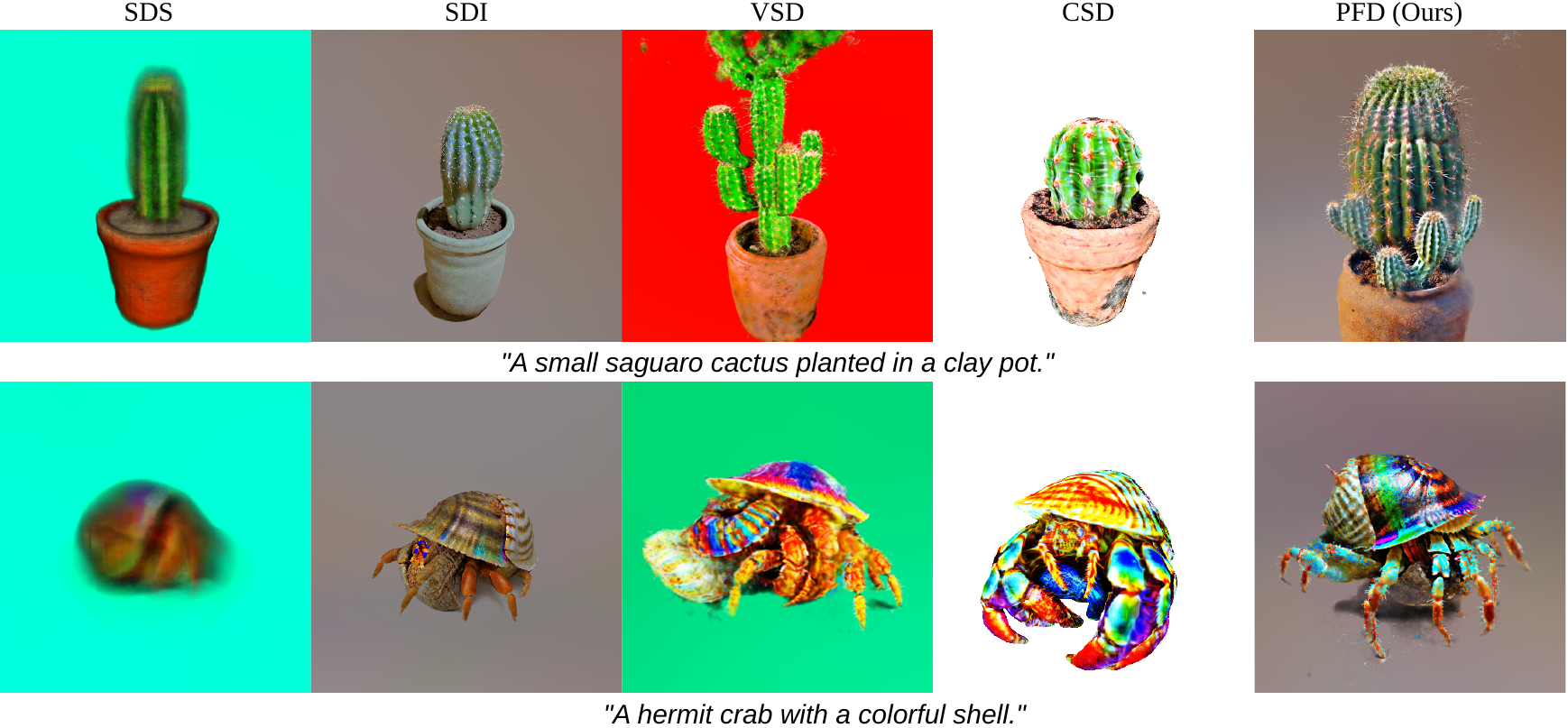}
\caption{Qualitative comparison of 3D objects generated by SDS, SDI, VSD, CSD, and PFD.}
\label{fig:qualitative}
\end{figure}

\begin{table}[h]
\centering
\caption{Quantitative comparison of SDS, SDI, VSD, CSD, and PFD on 3D object generation. Results are averaged over 20 generated examples per method.}
\label{tab:comparison}
\resizebox{\textwidth}{!}{
\begin{tabular}{lcccccccc}
\toprule
 & CLIP $\uparrow$ & \multicolumn{3}{c}{CLIP-IQA $\uparrow$} & IR $\uparrow$ & VRAM $\downarrow$ & Time $\downarrow$ \\
\cmidrule(lr){3-5}
Method &  & Quality & Sharpness & Naturalness &  &  &  \\
\midrule
SDS ($10k$) & $30.52$  & $0.26$ & $0.05$ & $0.20$ & $-0.82$ & $\sim 9\, \text{GB}$ & $\sim 45\,\text{mins}$ \\
SDI ($10k$) & $31.29$ & $0.28$ & $0.16$ & $0.27$ & $-0.13$ & $\sim 34\,\text{GB}$ & $\sim 120\,\text{mins}$ \\
VSD ($10k$) & $32.66$ & $0.58$ & $0.30$ & $0.36$ & $0.38$ & $\sim 45\,\text{GB}$ & $\sim 120\,\text{mins}$ \\
CSD ($20k$) & $32.24$ & $0.49$ & $0.23$ & $0.28$ & $0.18$ & $\sim 9\,\text{GB}$ & $\sim 120\,\text{mins}$ \\
\textbf{PFD (Ours) ($\mathbf{7.5k}$)} & $\mathbf{32.89}$ & $\mathbf{0.57}$ & $\mathbf{0.21}$ & $\mathbf{0.28}$ & $\mathbf{0.54}$ & $\mathbf{\sim 36\,\textbf{GB}}$ & $\mathbf{\sim 120 \,\textbf{mins}}$ \\
\bottomrule
\end{tabular}
}
\end{table}

\subsection{Ablation Study}
\label{subsec:ablation}

We provide a study of some components in our implementation in Appendix~\ref{sec:ablation}. The proposed extension (Appendix~\ref{subsec:proposed_improvement}) produces richer textures and finer detail compared to SDI, with the improvements becoming more evident at higher resolutions.The CFG scale (Appendix~\ref{subsec:cfg}) strongly influences optimization stability, where excessively large magnitudes degrade generation quality, and the correct sign convention is essential. Time annealing (Appendix~\ref{subsec:timeanneal}) further improves fine-scale detail and color quality by focusing optimization on lower-noise regimes during later stages.

%% file: sections/7_conclusions.tex
\section{Conclusion}
\label{sec:conclusion}

We introduced \emph{Probability-Flow Distillation} (PFD), a theoretically grounded distillation method for sampling rooted in Wasserstein gradient flows that extends prior work through principled insights, and applied it to text-to-3D generation.

\paragraph{Limitations and Future Work.}
Despite producing competitive results, PFD retains limitations shared with prior distillation-based methods. In particular, it remains susceptible to the Janus problem (See Appendix~\ref{sec:janus}), as well as artifacts such as floaters and unintended scene expansion. Incorporating stronger geometric priors and multi-view consistency constraints may improve structural fidelity. Additionally, sequential U-Net evaluations introduce significant computational overhead. Future work may explore more efficient PF-ODE solvers and trajectory distillation methods that approximate PF-ODE solutions in a few steps. Another promising direction is to extend PFD for 3D generation beyond the single-particle setting while reducing the associated computational cost.

\paragraph{Broader Impact.}
Advances in text-to-3D generation may enable applications in content creation, virtual environments, and design, while also raising concerns about synthetic or misleading media. Accordingly, careful deployment and appropriate safeguards remain important when applying such systems in practice. Since PFD extends existing distillation-based text-to-3D methods rather than introducing a fundamentally new generation capability, it does not substantially alter the associated risk profile.

%% file: sections/A_proof.tex
\section{Proof of Theorem~\ref{thm:pfd_wasserstein}}
\label{sec:proof}

\paragraph{Assumption and Definitions.}
We assume that the underlying diffusion process is governed by an It\^o SDE featuring a linear drift term, $\mathbf{f}(\mathbf{x}, t) = a(t)\mathbf{x}$ with a scalar diffusion coefficient $g(t)$ (e.g., VP-SDE). Let $T_\# \mu$ denote the pushforward of a probability measure $\mu$ under a generic map $T$. For a PF-ODE, we define the flow map $\Phi(t, s, \cdot): \mathbb{R}^d \to \mathbb{R}^d$ as the transformation from a state at time $s$ to time $t$ and denote its spatial Jacobian by $D_{\mathbf{x}}\Phi(t,s,\cdot)$, suppressing time arguments when clear from context. We consider the interaction between two distinct families of distributions, the target marginals $\{p_t\}_{t \in [0,T]}$ with the associated PF-ODE flow map $\Phi_p(t,s,\cdot)$, and the generated particle distribution $q_0$ together with its forward marginals $\{q_t\}_{t \in [0,T]}$. The forward noising process admits an equivalent deterministic PF-ODE representation with flow map $\Phi_q(t,s,\cdot)$, which produces the pushforward relation $q_t =(\Phi_q(t,0,\cdot))_\# q_0$.

\bigskip

The proof proceeds in two parts. In Section~\ref{subsec:part1}, we derive a simplified expression for the Wasserstein gradient. In Section~\ref{subsec:part2}, we then show that the PFD gradient coincides with this expression.

\subsection{Derivation of the Wasserstein Gradient}
\label{subsec:part1}

To compute the Wasserstein gradient, $\nabla_{\mathbf{x}_0} \frac{\delta \mathcal{F}[q_0]}{\delta q_0(\mathbf{x}_0)}$, we first establish a few useful results. Lemmas~1 and~2 are standard, while we derive Lemmas~3 and~4 specific to the formulation.

\paragraph{Lemma 1.}
\textit{For any $p, q \in \mathcal{P}_2(\mathbb{R}^d)$, the first variation of the Kullback--Leibler divergence $D_{\mathrm{KL}}[q] = D_{\mathrm{KL}}(q\|p)$ with respect to the density $q$ is:}
\[
\frac{\delta D_\mathrm{KL}[q]}{\delta q(\mathbf{x})} = \log \left( \frac{q(\mathbf{x})}{p(\mathbf{x})} \right) + 1.
\]

\textit{Proof.} Though this is a standard identity in variational calculus, we include the proof here for completeness. Let $q_\varepsilon(\mathbf{x}) = q(\mathbf{x}) + \varepsilon \eta(\mathbf{x})$ be a valid density perturbation, where the test function satisfies $\int \eta(\mathbf{x})\, d\mathbf{x} = 0$ to strictly conserve the probability mass. The first-order variation is:
\[
\left. \frac{d}{d\varepsilon} D_{\mathrm{KL}}[q_\varepsilon] \right|_{\varepsilon=0} = \int \left. \frac{d}{d\varepsilon} \left[ \big(q(\mathbf{x}) + \varepsilon \eta(\mathbf{x})\big) \log \frac{q(\mathbf{x}) + \varepsilon \eta(\mathbf{x})}{p(\mathbf{x})} \right] \right|_{\varepsilon=0} d\mathbf{x}.
\]
Applying the product rule yields:
\[
\int \left[ \eta(\mathbf{x}) \log \frac{q(\mathbf{x})}{p(\mathbf{x})} + q(\mathbf{x}) \left( \frac{p(\mathbf{x})}{q(\mathbf{x})} \right) \left( \frac{\eta(\mathbf{x})}{p(\mathbf{x})} \right) \right] d\mathbf{x} = \int \eta(\mathbf{x}) \left( \log \frac{q(\mathbf{x})}{p(\mathbf{x})} + 1 \right) d\mathbf{x}.
\]
By the standard definition of the functional derivative~\cite{field}, $\left. \frac{d}{d\varepsilon} \mathcal{F}[q_\varepsilon] \right|_{\varepsilon=0} = \int \frac{\delta \mathcal{F}[q]}{\delta q(\mathbf{x})} \eta(\mathbf{x})\, d\mathbf{x}$, we extract the functional derivative directly from the integrand. \hfill $\square$

\paragraph{Lemma 2.}
\textit{Let $\mathcal{F}$ be a functional depending on an initial density $q_0$ through an intermediate time-evolved density $q_t$. The chain rule for functional derivatives states that:}
\[
\frac{\delta \mathcal{F}[q_t[q_0]]}{\delta q_0(\mathbf{x}_0)} = \int \frac{\delta \mathcal{F}[q_t]}{\delta q_t(\mathbf{x})} \frac{\delta q_t[q_0](\mathbf{x})}{\delta q_0(\mathbf{x}_0)} \, d\mathbf{x}.
\]

\textit{Proof.} See Appendix A of~\cite{engel2011density}. \hfill $\square$

\paragraph{Lemma 3.}
\textit{Let $\{\rho_t\}_t$ be a generic sequence of marginal distributions and $\Phi_\rho(t, s, \cdot)$ be its associated PF-ODE flow map. Under a stop-gradient---where the spatial derivative of the score function is treated as zero---the Jacobian of the flow map simplifies to a scaled identity matrix:}
\[
D_{\mathbf{x}_s}\Phi_\rho(t, s, \mathbf{x}_s) = c(s, t) I, \quad \text{where} \quad c(s, t) = \exp\left( \int_s^t a(u)\, du \right).
\]

\textit{Proof.} For simplicity, let $J_t \coloneq D_{\mathbf{x}_s}\Phi_\rho(t, s, \mathbf{x}_s) = D_{\mathbf{x}_s}\mathbf{x}_t$ denote the Jacobian of the flow map from initial time $s$ to time $t$. Differentiating with respect to time $t$ gives $\frac{d J_t}{dt} = D_{\mathbf{x}_s} \big( \frac{d\mathbf{x}_t}{dt} \big)$. Substituting the PF-ODE results in the following.
\[
\frac{d J_t}{dt} = D_{\mathbf{x}_s} \left( a(t)\mathbf{x}_t - \frac{1}{2}g(t)^2 \nabla_{\mathbf{x}_t} \log \rho_t(\mathbf{x}_t) \right).
\]
For the first term, $D_{\mathbf{x}_s}(a(t)\mathbf{x}_t) = a(t)J_t$. For the second term, the multivariate chain rule gives:
\[
D_{\mathbf{x}_s} \big(\nabla_{\mathbf{x}_t} \log \rho_t(\mathbf{x}_t)\big) = D_{\mathbf{x}_t} \big(\nabla_{\mathbf{x}_t} \log \rho_t(\mathbf{x}_t)\big) J_t.
\]
Enforcing the stop-gradient strictly zeroes out the spatial derivative of the score function, eliminating the second term. The differential equation simplifies to $\frac{d J_t}{dt} = a(t) J_t$. Given $J_s = I$, solving this linear ODE directly gives $J_t = \exp\left( \int_s^t a(u)\, du \right) I = c(s, t) I$. \hfill $\square$

\paragraph{Lemma 4.}
\textit{The functional derivative of the evolved density $q_t(\mathbf{x})$ with respect to the initial density $q_0(\mathbf{x}_0)$ is given by the following:}
\[
\frac{\delta q_t[q_0](\mathbf{x})}{\delta q_0(\mathbf{x}_0)} = C(t) \delta\big(\mathbf{x}_0 - \Phi_q(0, t, \mathbf{x})\big),
\]
\textit{where $C(t) = c(t, 0)^{d} = \exp\left( -d \int_0^t a(s)\, ds \right)$.}

\textit{Proof.} The marginal $q_t$ is the pushforward of $q_0$ under $\Phi_q(t, 0, \cdot)$. By the change of variables formula:
\[
q_t(\mathbf{x}) = q_0\big(\Phi_q(0, t, \mathbf{x})\big) \left| \det D_{\mathbf{x}} \Phi_q(0, t, \mathbf{x}) \right|.
\]
From Lemma 3, the Jacobian is $D_{\mathbf{x}} \Phi_q(0, t, \mathbf{x}) = c(t, 0) I$, with determinant $c(t, 0)^{d} = C(t)$. Substituting this scaling factor into the density relation yields:
\[
q_t(\mathbf{x}) = C(t) q_0\big(\Phi_q(0, t, \mathbf{x})\big).
\]
Using the sifting property of the Dirac delta function, we can rewrite this as:
\[
q_t(\mathbf{x}) = \int C(t) q_0(\mathbf{y}) \delta\big(\mathbf{y} - \Phi_q(0, t, \mathbf{x})\big) \, d\mathbf{y}.
\]
Taking the functional derivative with respect to $q_0$ evaluated at $\mathbf{x}_0$ gives:
\[
\frac{\delta q_t[q_0](\mathbf{x})}{\delta q_0(\mathbf{x}_0)} = C(t) \int \frac{\delta q_0(\mathbf{y})}{\delta q_0(\mathbf{x}_0)} \delta\big(\mathbf{y} - \Phi_q(0, t, \mathbf{x})\big) \, d\mathbf{y}.
\]
Since $\frac{\delta q_0(\mathbf{y})}{\delta q_0(\mathbf{x}_0)} = \delta(\mathbf{y} - \mathbf{x}_0)$, the integral collapses to $C(t) \delta\big(\mathbf{x}_0 - \Phi_q(0, t, \mathbf{x})\big)$. \hfill $\square$

\paragraph{Functional Derivative of KL divergence.}
We now evaluate the functional derivative of the KL divergence. Applying Lemma 2, we substitute the results from Lemma 1 and Lemma 4:
\[
\frac{\delta D_{\mathrm{KL}}[q_t[q_0]]}{\delta q_0(\mathbf{x}_0)} = \int \left( \log \frac{q_t(\mathbf{x})}{p_t(\mathbf{x})} + 1 \right) C(t) \delta\big(\mathbf{x}_0 - \Phi_q(0, t, \mathbf{x})\big) \, d\mathbf{x}.
\]
We perform a change of variables $\mathbf{y} = \Phi_q(0, t, \mathbf{x})$. The Jacobian determinant of this mapping is exactly $C(t)$, giving $d\mathbf{x} = C(t)^{-1} d\mathbf{y}$. Substituting this transformation, $C(t)$ perfectly cancels:
\[
\frac{\delta D_{\mathrm{KL}}[q_t[q_0]]}{\delta q_0(\mathbf{x}_0)} = \int \left( \log \frac{q_t(\Phi_q(t, 0, \mathbf{y}))}{p_t(\Phi_q(t, 0, \mathbf{y}))} + 1 \right) \delta(\mathbf{x}_0 - \mathbf{y}) \, d\mathbf{y}.
\]
This simplifies to:
\[
\frac{\delta D_{\mathrm{KL}}[q_t[q_0]]}{\delta q_0(\mathbf{x}_0)} = \log \left( \frac{q_t(\mathbf{x}_t)}{p_t(\mathbf{x}_t)} \right) + 1.
\]

\paragraph{Putting It All Together.}
The objective functional is given by$\mathcal{F}[q_0] = \mathbb{E}_{t \sim \mathcal{U}[0,T]} \left[ w(t) D_{\mathrm{KL}}(q_t \| p_t) \right]$, with weighting function:
\[
w(t) = \frac{1}{2}(T - t) g(t)^2\,c(t,0)^2.
\]

By the linearity of the functional derivative, substituting our previous result gives:
\[
\frac{\delta \mathcal{F}[q_0]}{\delta q_0(\mathbf{x}_0)} = \mathbb{E}_t \left[ w(t) \left( \log \frac{q_t(\mathbf{x}_t)}{p_t(\mathbf{x}_t)} + 1 \right) \right].
\]

To derive the Wasserstein gradient, we take the spatial gradient with respect to the initial state $\mathbf{x}_0$. Passing the gradient operator through the expectation and applying the spatial chain rule, we have $\nabla_{\mathbf{x}_0} = (D_{\mathbf{x}_0} \mathbf{x}_t)^T \nabla_{\mathbf{x}_t}$. Using Lemma 3, the forward flow Jacobian reduces to $D_{\mathbf{x}_0} \mathbf{x}_t = c(0,t) I$, giving us:
\[
\nabla_{\mathbf{x}_0} \frac{\delta \mathcal{F}[q_0]}{\delta q_0(\mathbf{x}_0)} = \mathbb{E}_t \left[ w(t)\, c(0,t) \nabla_{\mathbf{x}_t} \log \frac{q_t}{p_t}(\mathbf{x}_t) \right].
\]

Substituting the explicit form of $w(t)$ and applying the inverse relation $c(t,0) = c(0,t)^{-1}$, the expression simplifies to:
\[
\nabla_{\mathbf{x}_0} \frac{\delta \mathcal{F}[q_0]}{\delta q_0(\mathbf{x}_0)} = \mathbb{E}_t \left[\frac{1}{2}(T - t) g(t)^2\,c(t,0) \nabla_{\mathbf{x}_t} \log \frac{q_t}{p_t}(\mathbf{x}_t) \right].
\]

\subsection{Expected Gradient Direction}
\label{subsec:part2}

We now demonstrate that the PFD gradient $\mathbb{E}_t[\Delta_t(\mathbf{x}_0)]$, where $\Delta_t(\mathbf{x}_0) = \mathbf{x}_0 - \hat{\mathbf{x}}_0$, exactly matches the Wasserstein gradient derived above. This requires the following preliminary result.

\paragraph{Lemma 5.}
\textit{Consider a flow map $\Phi(t, s, \mathbf{x})$ governed by the generic differential equation $\frac{d\mathbf{x}}{d\tau} = \mathbf{F}(\mathbf{x}, \tau)$. The partial derivative of this map with respect to its initial time $s$ satisfies the following:}
\[
\partial_s \Phi(t, s, \mathbf{x}) = -D_{\mathbf{x}}\Phi(t, s, \mathbf{x}) \mathbf{F}(\mathbf{x}, s).
\]

\textit{Proof.} See Chapter 5 of \cite{ode}. \hfill $\square$

\paragraph{Deriving an Expression for the Gradient Estimator.}
We have $\hat{\mathbf{x}}_0(t) = \Phi_p(0,t,\mathbf{x}_t)$. Differentiating this composition with respect to the sampled time $t$ gives:
\[
\frac{d}{dt}\hat{\mathbf{x}}_0 = \partial_t \Phi_p(0,t,\mathbf{x}_t) + D_{\mathbf{x}_t} \Phi_p(0,t,\mathbf{x}_t)\, \dot{\mathbf{x}}_t.
\]

Applying Lemma 5 to the derivative $\partial_t \Phi_p$ and substituting the vector field of the pre-trained prior's PF-ODE, we get the following expression:
\[
\partial_t \Phi_p(0,t,\mathbf{x}_t) = - D_{\mathbf{x}_t} \Phi_p(0,t,\mathbf{x}_t) \left[ \mathbf{f}(\mathbf{x}_t, t) - \frac{1}{2}g(t)^2 \nabla_{\mathbf{x}_t} \log p_t(\mathbf{x}_t) \right].
\]

Concurrently, the forward trajectory $\mathbf{x}_t$ is governed by the generation PF-ODE: $\dot{\mathbf{x}}_t = \mathbf{f}(\mathbf{x}_t, t) - \frac{1}{2}g(t)^2 \nabla_{\mathbf{x}_t} \log q_t(\mathbf{x}_t).$ Substituting these expressions back into the total time derivative of $\hat{\mathbf{x}}_0$ results in:
\begin{align*}
\frac{d}{dt}\hat{\mathbf{x}}_0
&=
D_{\mathbf{x}_t} \Phi_p(0,t,\mathbf{x}_t)
\Big[
\Big(
\mathbf{f}(\mathbf{x}_t, t)
-
\frac{1}{2}g(t)^2
\nabla_{\mathbf{x}_t} \log q_t(\mathbf{x}_t)
\Big)
\\
&\quad -
\Big(
\mathbf{f}(\mathbf{x}_t, t)
-
\frac{1}{2}g(t)^2
\nabla_{\mathbf{x}_t} \log p_t(\mathbf{x}_t)
\Big)
\Big].
\end{align*}

The drift terms $\mathbf{f}(\mathbf{x}_t, t)$ cancel identically. Moreover, under the applied stop-gradient, using Lemma 3, the Jacobian of the reverse flow simplifies to the scalar matrix $D_{\mathbf{x}_t} \Phi_p(0,t,\mathbf{x}_t) = c(t,0)I$. Taking this into account, we are arrive at the following.
\[
\frac{d}{dt}\hat{\mathbf{x}}_0 = -\frac{1}{2} c(t,0)g(t)^2 \nabla_{\mathbf{x}_t} \log  \frac{q_t(\mathbf{x}_t)}{p_t(\mathbf{x}_t)}.
\]

Since $\mathbf{x}_0$ is independent of $t$, its derivative with respect to it vanishes. Consequently, the derivative of $\Delta_t$ with respect to $t$ is simply the negation of the reconstruction's derivative:
\[
\frac{d}{dt} \Delta_t = - \frac{d}{dt} \hat{\mathbf{x}}_0 = \frac{1}{2} c(t,0)g(t)^2 \nabla_{\mathbf{x}_t} \log  \frac{q_t(\mathbf{x}_t)}{p_t(\mathbf{x}_t)}.
\]

Integrating this from time $0$ to $t$, and using the boundary condition $\Delta_0 = 0$, which follows from $\hat{\mathbf{x}}_0(0)=\Phi_p(0,0,\mathbf{x}_0)=\mathbf{x}_0$, gives us the following expression for the gradient estimator:
\[
\Delta_t = \int_0^t \frac{1}{2} c(s,0)g(s)^2 \nabla_{\mathbf{x}_s} \log  \frac{q_s(\mathbf{x}_s)}{p_s(\mathbf{x}_s)}\, ds.
\]

\paragraph{Final Gradient.}
To compute the expected discrepancy, we integrate over uniformly sampled random times $t \sim \mathcal{U}(0,T)$:
\[
\mathbb{E}_t[\Delta_t] = \frac{1}{T} \int_0^T \left( \int_0^t \frac{1}{2} c(s,0)g(s)^2 \nabla_{\mathbf{x}_s} \log  \frac{q_s(\mathbf{x}_s)}{p_s(\mathbf{x}_s)}\, ds \right) dt.
\]

We apply Fubini's theorem to exchange the order of integration over the triangular domain $0 \leq s \leq t \leq T$. By integrating first with respect to $t$, we obtain:
\[
\mathbb{E}_t[\Delta_t] = \frac{1}{T} \int_0^T \left( \int_s^T dt \right) \frac{1}{2} c(s,0)g(s)^2 \nabla_{\mathbf{x}_s} \log  \frac{q_s(\mathbf{x}_s)}{p_s(\mathbf{x}_s)}\, ds.
\]

Evaluating the straightforward inner integral $\int_s^T dt = (T - s)$ simplifies the expression to:
\[
\mathbb{E}_t[\Delta_t] = \frac{1}{T} \int_0^T \frac{1}{2}(T - s) c(s,0)g(s)^2 \nabla_{\mathbf{x}_s} \log \frac{q_s(\mathbf{x}_s)}{p_s(\mathbf{x}_s)}\, ds.
\]
Recognizing this as an expectation over the uniform distribution $t \sim \mathcal{U}(0,T)$, we arrive at our final expression for the gradient:
\[
\mathbb{E}_t[\Delta_t] = \mathbb{E}_{t \sim \mathcal{U}[0,T]} \left[ \frac{1}{2}(T - t) g(t)^2\,c(t,0) \nabla_{\mathbf{x}_t} \log \frac{q_t(\mathbf{x}_t)}{p_t(\mathbf{x}_t)} \right].
\]

\bigskip

This result is algebraically identical to the Wasserstein gradient $\nabla_{\mathbf{x}_0} \frac{\delta \mathcal{F}[q_0]}{\delta q_0(\mathbf{x}_0)}$ derived in the Appendix~\ref{subsec:part1}. Therefore, we conclude that the negative gradient $\mathbb{E}_t[\Delta_t]$ provides the exact velocity field required to perform Wasserstein gradient descent on the time-averaged KL divergence functional, thus promoting distribution matching. \hfill $\blacksquare$

%% file: sections/B_algorithm.tex
\section{PFD Algorithm}
\label{sec:algo}

We summarize here the PFD algorithm as an iterative particle-based optimization procedure. Let $q_0^\tau$ denote the empirical distribution of particles at iteration $\tau$. The operators $\Phi^{\rightarrow}_{q^\tau}(\cdot, t)$ and $\Phi^{\leftarrow}_{p}(\cdot, t)$ denote the solutions of the forward and reverse PF-ODEs associated with the current distribution $\{q_t^\tau\}_t$ and the target diffusion prior $\{p_t\}_t$, respectively.

\begin{algorithm}[H]
\caption{\textit{Probability-Flow Distillation}}
\label{alg:pfd}
\begin{algorithmic}[1]
\State \textbf{Input:} $\{\mathbf{x}_0^{(i)}\}_{i=1}^N,\ \tau_{\max},\ \eta$ \hfill // particle ensemble, iterations, learning rate
\State $\tau \leftarrow 0$ \hfill // initialize iteration
\While{$\tau < \tau_{\max}$ \textbf{and} not converged} \hfill // stopping criterion
    \State $\mathbf{x}_0 \sim q_0^\tau,\quad t \sim \mathcal{U}(0,T)$ \hfill // sample particle and time
    \State $\mathbf{x}_t \leftarrow \Phi^{\rightarrow}_{q^\tau}(\mathbf{x}_0, t)$ \hfill // forward PF-ODE
    \State $\hat{\mathbf{x}}_0 \leftarrow \Phi^{\leftarrow}_{p}(\mathbf{x}_t, t)$ \hfill // reverse PF-ODE
    \State $\Delta_t \leftarrow \mathbf{x}_0 - \hat{\mathbf{x}}_0$ \hfill // gradient estimator
    \State $\mathbf{x}_0 \leftarrow \mathbf{x}_0 - \eta \Delta_t$ \hfill // particle update
    \State $\tau \leftarrow \tau + 1$ \hfill // increment iteration
\EndWhile
\State \textbf{Output:} $\{\mathbf{x}_0^{(i)}\}_{i=1}^N$ \hfill // optimized particle ensemble
\end{algorithmic}
\end{algorithm}

%% file: sections/C_ddimode.tex
\section{Derivation of the Reparameterized DDIM PF-ODE}
\label{sec:ddimode}

\paragraph{The PF-ODE associated with VP-SDE.}
For the VP-SDE, the drift term is given by $\mathbf{f}(\mathbf{x}, t) = -\frac{1}{2}\beta_t\,\mathbf{x}$ with diffusion coefficient $g(t) = \sqrt{\beta_t}$. The corresponding PF-ODE is, therefore, given by
\[
\frac{d \mathbf{x}_t}{dt} = -\frac{1}{2} \beta_t\, \mathbf{x}_t - \frac{1}{2}\beta_t\, \nabla_{\mathbf{x}} \log p_t(\mathbf{x}_t).
\]
By applying the standard substitution for the score function in terms of the predicted noise $\boldsymbol{\epsilon}_\phi(\mathbf{x}_t, t)$:
$\nabla_{\mathbf{x}} \log p_t(\mathbf{x}_t) = -\frac{\boldsymbol{\epsilon}_\phi(\mathbf{x}_t, t)}{\sqrt{1 - \alpha_t}}$
, the ODE can be rewritten in its noise-conditioned form:
\[
\frac{d \mathbf{x}_t}{dt} = -\frac{1}{2} \beta_t\, \mathbf{x}_t + \frac{1}{2}\frac{\beta_t}{\sqrt{1 - \alpha_t}} \boldsymbol{\epsilon}_\phi(\mathbf{x}_t, t).
\]

\paragraph{Change of Variables.}
Next, we apply the change of variables $\tilde{\mathbf{x}}_t = \frac{\mathbf{x}_t}{\sqrt{\alpha_t}}$. Taking the derivative with respect to time $t$ results in:
\[
\frac{d \tilde{\mathbf{x}}_t}{dt} = \frac{1}{\sqrt{\alpha_t}} \frac{d \mathbf{x}_t}{dt} - \frac{1}{2} \frac{\dot{\alpha}_t}{\alpha_t} \tilde{\mathbf{x}}_t.
\]
Substituting the ODE equation from above and the relation $\dot{\alpha}_t = -\beta_t\,\alpha_t$ causes the $\tilde{\mathbf{x}}_t$ terms to perfectly cancel, leaving:
\[
\frac{d \tilde{\mathbf{x}}_t}{dt} = \frac{1}{2} \frac{\beta_t}{\sqrt{\alpha_t (1 - \alpha_t)}} \boldsymbol{\epsilon}_\phi(\mathbf{x}_t, t).
\]

\paragraph{Reparameterization in $\sigma$.}
To connect this to the signal-to-noise ratio, we differentiate $\sigma_t = \sqrt{\frac{1 - \alpha_t}{\alpha_t}}$ with respect to $t$:
\[
\frac{d \sigma_t}{dt}
= \frac{1}{2} \frac{1}{\sigma_t} \frac{d}{dt}\!\left(\frac{1 - \alpha_t}{\alpha_t}\right)
= -\frac{1}{2} \frac{1}{\sigma_t} \frac{1}{\alpha_t} \frac{\dot{\alpha_t}}{\alpha_t}
= \frac{1}{2} \frac{1}{\sigma_t} \frac{\beta_t}{\alpha_t}
= \frac{1}{2} \frac{\beta_t}{\sqrt{\alpha_t (1 - \alpha_t)}}.
\]
Thus, giving rise to the simplified reparameterized DDIM PF-ODE:
\[
\frac{d \tilde{\mathbf{x}}_t}{dt} = \boldsymbol{\epsilon}_\phi(\mathbf{x}_t, t) \frac{d \sigma_t}{dt}.
\]
Finally, rearranging the initial definition $\tilde{\mathbf{x}}_t = \frac{\mathbf{x}_t}{\sqrt{\alpha_t}}$ and using the identity $\alpha_t = \frac{1}{1 + \sigma_t^2}$, we can recover the unscaled state:
\[
\mathbf{x}_t = \frac{\tilde{\mathbf{x}}_t}{\sqrt{1 + \sigma_t^2}}.
\]

%% file: sections/D_details.tex
\section{Implementation Details}
\label{sec:details}

We implement our method in threestudio~\cite{threestudio}---a modular framework for text-to-3D generation via distillation---and evaluate against its provided baseline implementations.

\subsection{3D Generation with PFD}
\label{subsec:pfd_implementations}

\paragraph{Representation and Initialization.}
We instantiate PFD using a NeRF-based representation with a progressive hash-grid encoding~\cite{instantngp}. The encoding employs $16$ levels with $2$ features per level, a base resolution of $16$, and a per-level scale factor of $1.447$, resulting in a maximum resolution of $4096$. Higher-frequency levels are progressively activated beginning from level $8$ (resolution $\sim 200$), with only lower-frequency levels enabled during the first $1000$ iterations and an additional level activated every $572$ iterations thereafter, so that all levels become active by approximately $5\text{k}$ iterations. This coarse-to-fine schedule improves geometric smoothness.

Following Magic3D~\cite{magic3d}, we initialize the density field using $\sigma_{\text{init}}(\boldsymbol{\mu})
=
\lambda_\sigma
\left(
1 - \frac{\|\boldsymbol{\mu}\|_2^2}{r}
\right)$, with $\lambda_\sigma = 10$ and $r = 0.5$. The camera radius is sampled from $\mathcal{U}(1.0,1.5)$, and the scene is normalized within a bounding box of size $1.0$. We employ a simple material model with sigmoid-activated RGB outputs together with a neural environment-map background.

\paragraph{Diffusion Model and Solver.}

We use Stable Diffusion 2.1-base~\cite{ldm} as the underlying text-to-image diffusion model. The DDIM-ODE is solved in both forward and reverse directions using the deterministic DPM++ solver~\cite{dpmsolver}, a higher-order numerical solver, with a time-dependent discretization consisting of $\lfloor 10\,t/T \rfloor$ steps. We employ a CFG scale of $7.5$ during the reverse process and $-6.5$ during the forward process (See Section~\ref{subsec:sdigrad} and Appendix~\ref{subsec:cfg}).

\paragraph{Optimization.}

We optimize for $7.5\text{k}$ iterations using Adam with learning rates of $10^{-2}$ for geometry parameters and $10^{-3}$ for background parameters. The sampling time range is annealed from $[0.02\mathrm{T},\,0.98\mathrm{T}]$ to $[0.02\mathrm{T},\,0.70\mathrm{T}]$ after $7\text{k}$ iterations, improving texture refinement during later stages of optimization. All experiments are conducted on a single NVIDIA A100 GPU with 80\,GB VRAM, requiring approximately two hours per generated object.

\subsection{Baselines}
\label{subsec:baseline_implementations}
For SDS, we use DeepFloyd IF~\cite{deepfloyd} as the underlying diffusion model, following the recommendations in the threestudio repository, and run it with the default hyperparameters for $10\text{k}$ iterations. For SDI and CSD, we retain the default configurations provided in threestudio, including the additional components and multistage optimizations proposed by the respective authors. For VSD, which also consists of multiple stages, we compare only against the first stage, as it corresponds most closely to the setting considered by PFD, while the subsequent stages primarily perform additional geometry and texture refinement and incur substantially higher computational cost. Consequently, we restrict VSD to $10\text{k}$ iterations, with the sampling time range annealed from $[0.02\mathrm{T},\,0.98\mathrm{T}]$ to $[0.02\mathrm{T},\,0.50\mathrm{T}]$ after $5\text{k}$ iterations. We additionally employ the single-particle setting, which is the only configuration currently supported by threestudio and is consistent with the single-particle adaptation of PFD for text-to-3D generation.

\subsection{Licenses}
\label{subsec:licenses}

\begin{table}[h]
\centering
\caption{Licenses of models and libraries used in this work.}
\begin{tabular}{ll}
\toprule
\textbf{Component} & \textbf{License} \\
\midrule
Stable Diffusion 2.1-base~\cite{ldm} & CreativeML Open RAIL++-M License \\
DeepFloyd IF~\cite{deepfloyd} & DeepFloyd IF License \\
ThreeStudio~\cite{threestudio} & Apache License 2.0 \\
HuggingFace Diffusers~\cite{diffusers} & Apache License 2.0 \\
\bottomrule
\end{tabular}
\end{table}

%% file: sections/E_ablation.tex
\section{Ablation Study}
\label{sec:ablation}

\subsection{Effect of the Proposed Improvement}
\label{subsec:proposed_improvement}

We evaluate the effect of the proposed improvement by comparing PFD against SDI---the baseline method upon which our approach builds---under identical settings, with time annealing disabled and no additional regularization applied. Qualitative results are shown in Figure~\ref{fig:ablation_pfd}. At low resolution ($64 \times 64$, Figure~\ref{fig:ablation_pfd}(a,c)), PFD already produces more visually appealing textures with richer fine-grained detail compared to SDI. At higher resolution ($512 \times 512$, Figure~\ref{fig:ablation_pfd}(b,d)), this gap becomes even more pronounced, with PFD generating more natural-looking details and substantially improved perceptual texture quality.

\begin{figure}[ht]
\centering
\includegraphics[width=\textwidth]{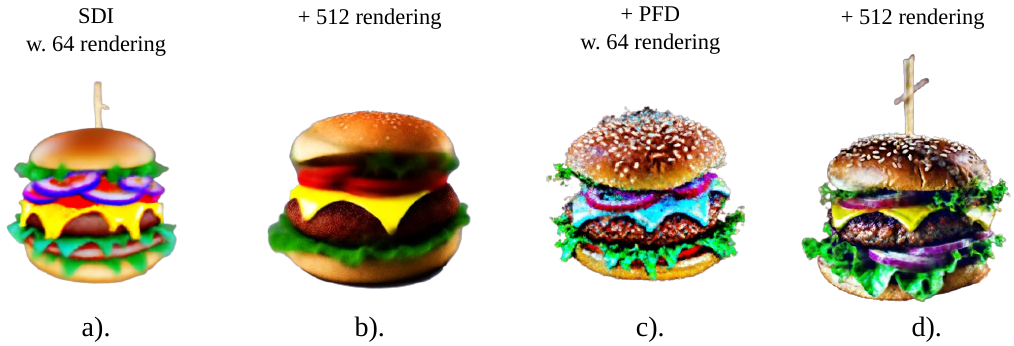}
\vspace{-10pt}
\caption{(a,b) SDI at $64 \times 64$ and $512 \times 512$, (c,d) PFD at $64 \times 64$ and $512 \times 512$}
\label{fig:ablation_pfd}
\end{figure}

\vspace{-10pt}

\subsection{Effect of the CFG Scale}
\label{subsec:cfg}

\paragraph{Observation at varying scales.}
We analyze the effect of the CFG scale on generation quality. As the guidance scale increases, the generated outputs become increasingly noisy and distorted (see Figure~\ref{fig:cfg_scale}). This behavior arises because larger guidance scales amplify the guidance term, making the discretized DDIM-ODE less stable during optimization. In practice, moderate guidance scales in the range of $5$ to $12$ provide a good balance between convergence speed and prompt alignment without causing instability. Accordingly, we use a CFG scale of $7.5$ in all experiments.

\begin{figure}[ht]
\centering
\includegraphics[width=\textwidth]{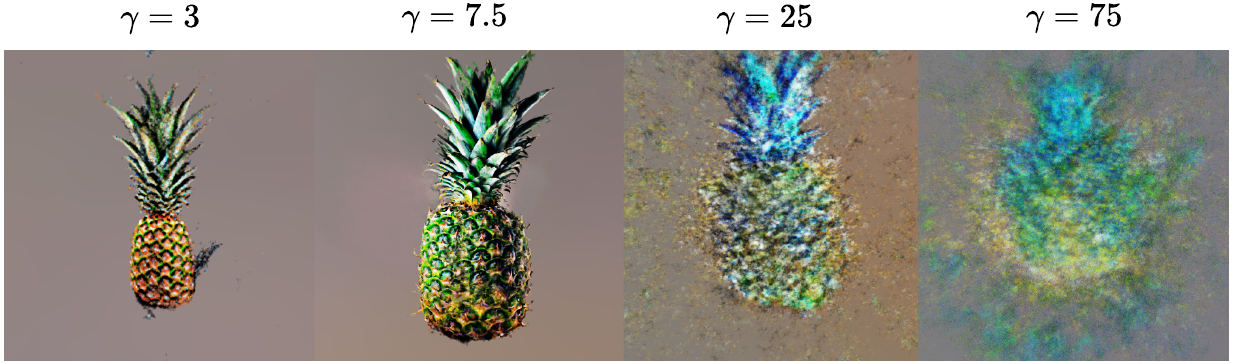}
\vspace{-10pt}
\caption{Effect of increasing CFG scale $\gamma$. As $\gamma$ increases, the generated results become progressively distorted.}
\label{fig:cfg_scale}
\end{figure}

\paragraph{Negative CFG for DDIM Inversion.}
We further study the role of negative CFG in the forward process. As discussed in Section~\ref{subsec:sdigrad}, a negative CFG is required to correctly reverse the source and target distributions, as it provides an estimate of the score of the source marginals $\{q_t^\tau\}_t$. Figure~\ref{fig:cfg_sign} shows results for different sign combinations. Using a negative sign in the forward process and a positive sign in the reverse process results in coherent generations, whereas other combinations lead to degraded outputs with artifacts.

\begin{figure}[ht]
\centering
\includegraphics[width=\textwidth]{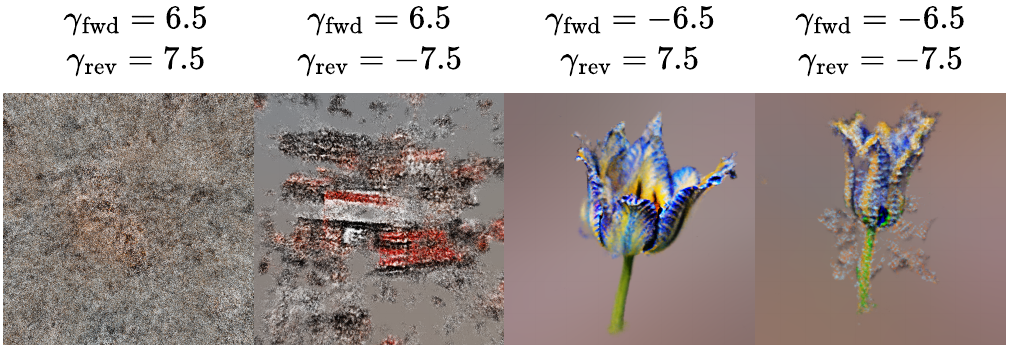}
\vspace{-10pt}
\caption{Effect of forward and reverse CFG scales $(\gamma_{\text{fwd}}, \gamma_{\text{rev}})$.}
\label{fig:cfg_sign}
\end{figure}

\vspace{-10pt}

\subsection{Effect of Time Annealing}
\label{subsec:timeanneal}

We study the effect of time annealing on generation quality. Early in optimization, sampling across a wide range of $t$ encourages global alignment between $q_t^\tau$ and $p_t$. During later stages ($\tau > 7000$), we anneal the sampling range toward smaller values of $t$, focusing optimization on lower-noise regimes and high-frequency details. As shown in Figure~\ref{fig:timeanneal}, this improves texture quality and produces cleaner colors compared to using a fixed time range.

\begin{figure}[ht]
    \centering
    \includegraphics[width=\linewidth]{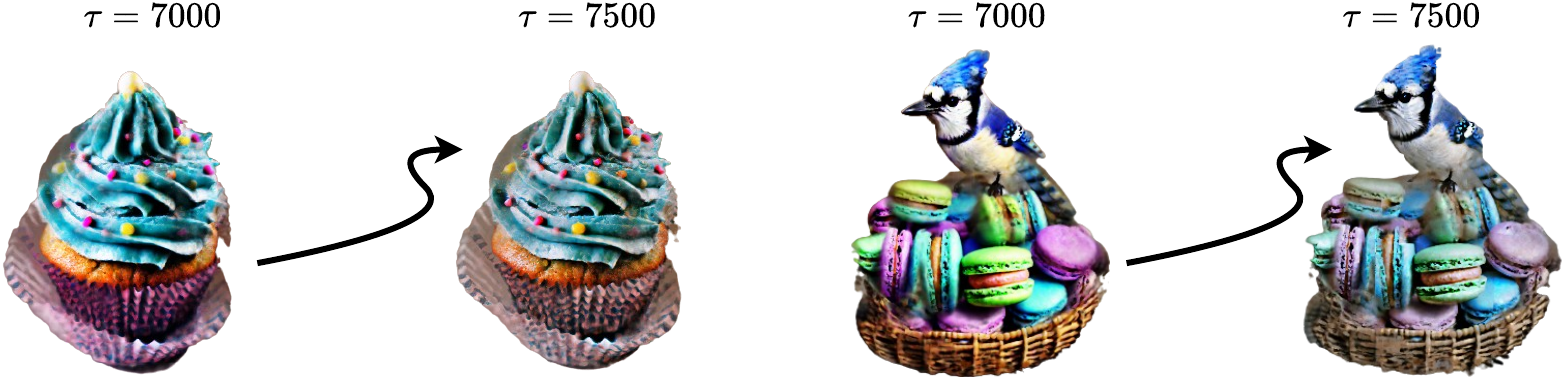}
    \vspace{-10pt}
    \caption{Effect of time annealing during late-stage optimization on generated 3D assets.}
    \label{fig:timeanneal}
\end{figure}

%% file: sections/F_janus.tex
\section{Janus Failure Mode}
\label{sec:janus}

A common failure mode in text-to-3D generation via distillation is the Janus problem, where multiple faces or object identities are fused into a single geometry, as shown in Figure~\ref{fig:janus}. In the mouse bust example, the face is repeated across different sides of the geometry, producing conflicting frontal structures. In the hotdog example, two distinct dog faces emerge within the same object under different viewpoints.

\begin{figure}
    \centering
    \includegraphics[width=\linewidth]{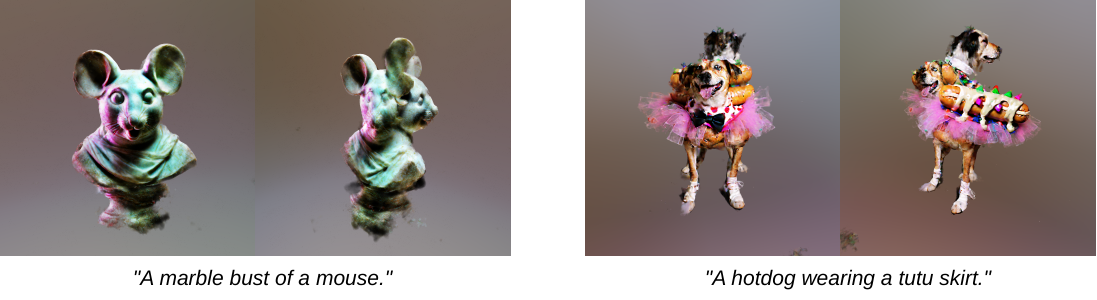}
    \vspace{-10pt}
    \caption{Illustration of the Janus problem. Generated 3D objects exhibit inconsistent geometry across viewpoints.}
    \label{fig:janus}
\end{figure}

%% file: sections/G_additional.tex
\section{Additional Qualitative Results}
\label{sec:additional}

\begin{figure}[H]
    \centering
    \includegraphics[width=\linewidth]{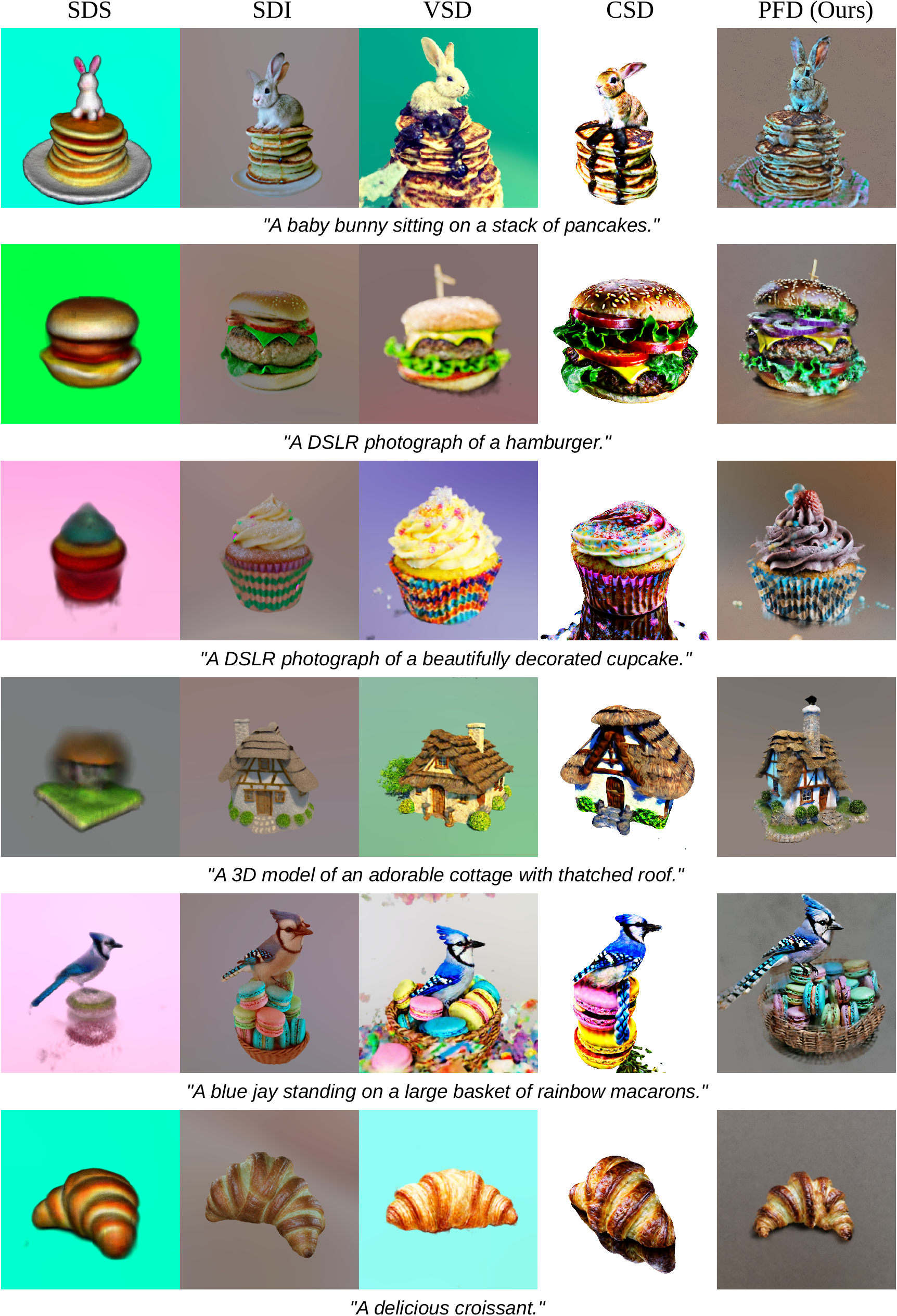}
    \label{fig:additional1}
\end{figure}

\clearpage

\vspace*{20pt}

\begin{figure}[H]
    \centering
    \includegraphics[width=\linewidth]{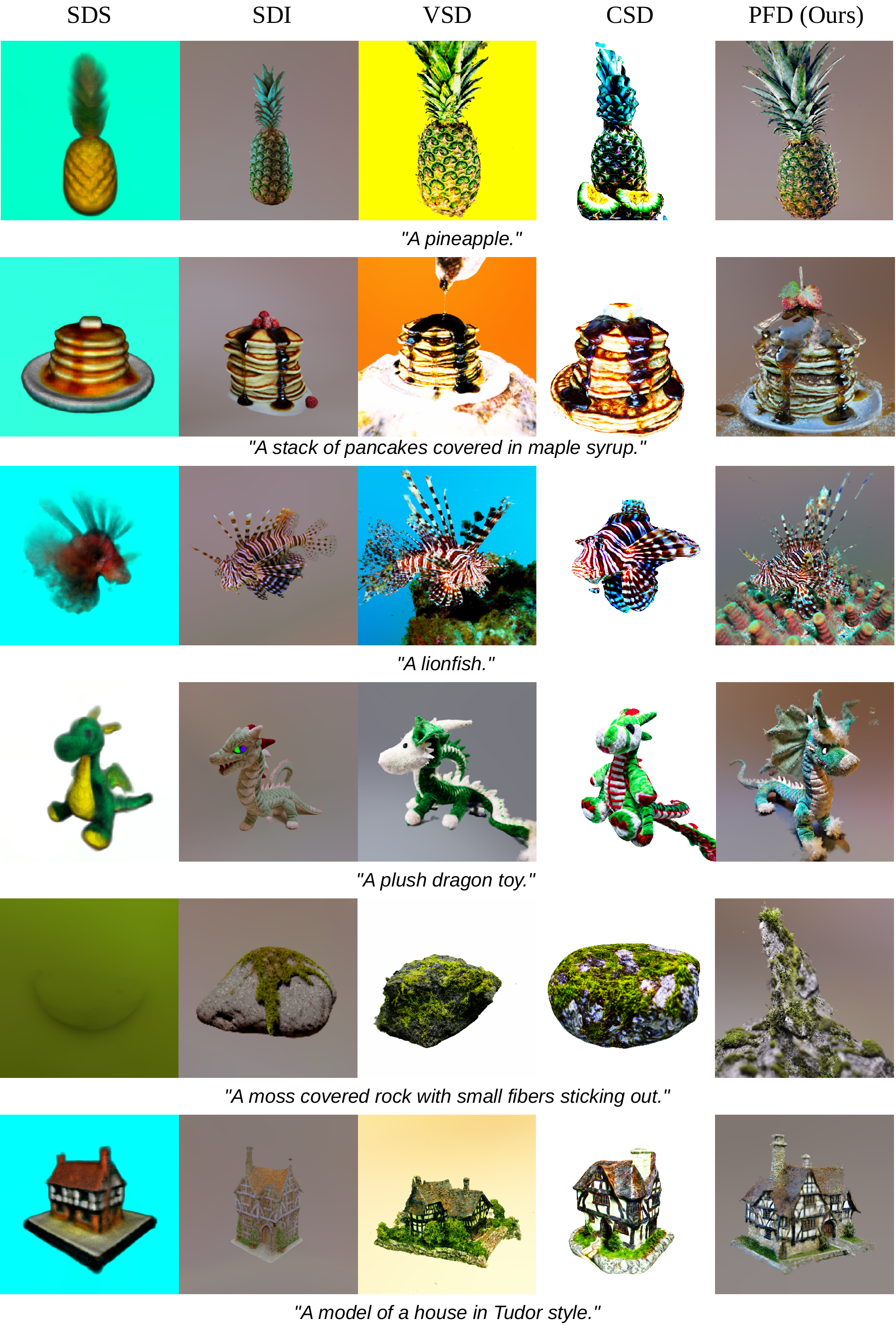}
    \label{fig:additional2}
\end{figure}

\clearpage

\vspace*{20pt}

\begin{figure}[H]
    \centering
    \includegraphics[width=\linewidth]{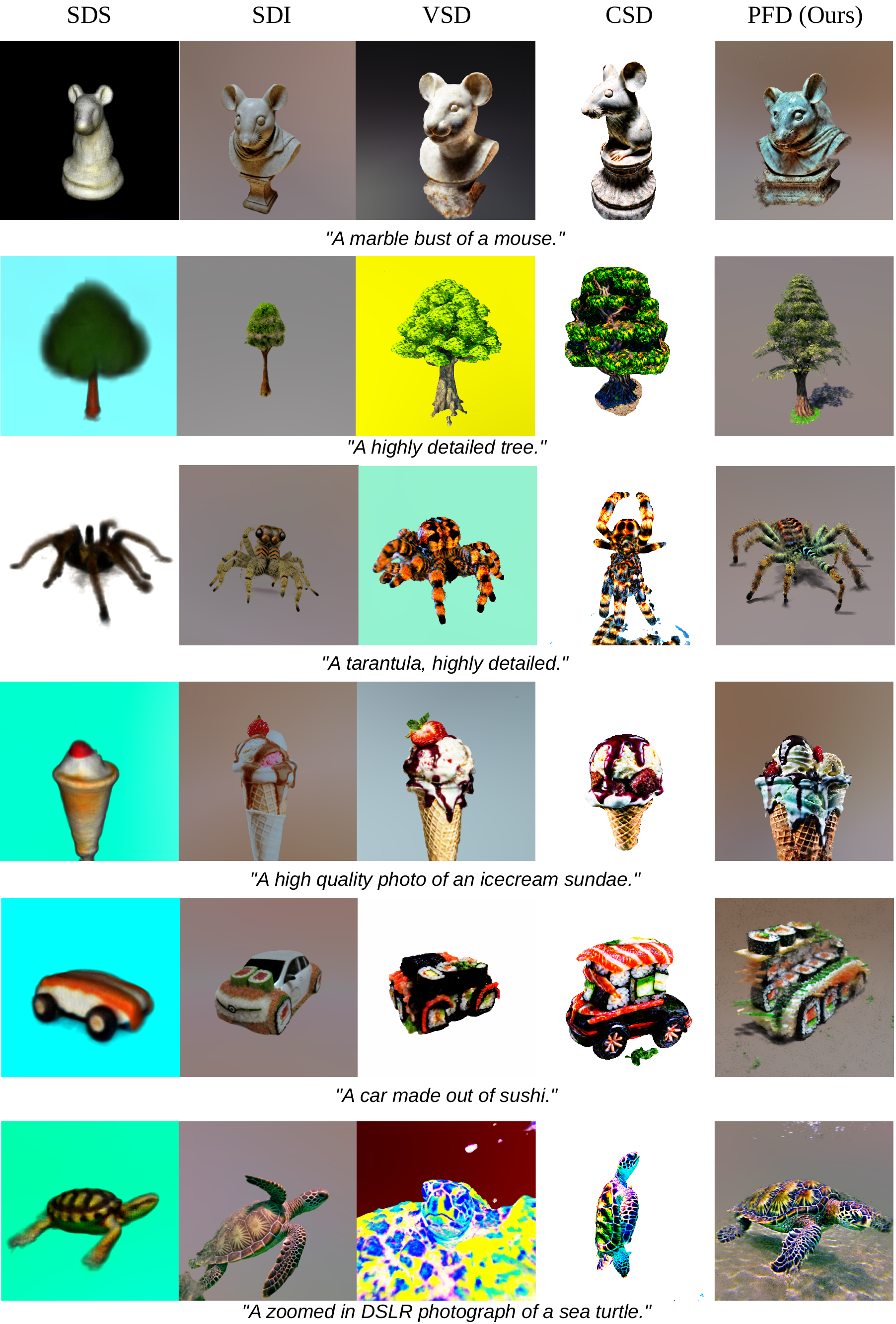}
    \label{fig:additional3}
\end{figure}